%% 
%% Copyright 2007-2020 Elsevier Ltd
%% 
%% This file is part of the 'Elsarticle Bundle'.
%% ---------------------------------------------
%% 
%% It may be distributed under the conditions of the LaTeX Project Public
%% License, either version 1.2 of this license or (at your option) any
%% later version.  The latest version of this license is in
%%    http://www.latex-project.org/lppl.txt
%% and version 1.2 or later is part of all distributions of LaTeX
%% version 1999/12/01 or later.
%% 
%% The list of all files belonging to the 'Elsarticle Bundle' is
%% given in the file `manifest.txt'.
%% 

%% Template article for Elsevier's document class `elsarticle'
%% with numbered style bibliographic references
%% SP 2008/03/01
%%
%% 
%%
%% $Id: elsarticle-template-num.tex 190 2020-11-23 11:12:32Z rishi $
%%
%%
% \documentclass[authoryear,preprint,review,12pt]{elsarticle}
% \documentclass[preprint,3p,times]{elsarticle}
\documentclass[preprint,12pt,3p]{elsarticle}
%% Use the option review to obtain double line spacing
%% \documentclass[authoryear,preprint,review,12pt]{elsarticle}

%% Use the options 1p,twocolumn; 3p; 3p,twocolumn; 5p; or 5p,twocolumn
%% for a journal layout:
%% \documentclass[final,1p,times]{elsarticle}
%% \documentclass[final,1p,times,twocolumn]{elsarticle}
%% \documentclass[final,3p,times]{elsarticle}
%% \documentclass[final,3p,times,twocolumn]{elsarticle}
%% \documentclass[final,5p,times]{elsarticle}
%% \documentclass[final,5p,times,twocolumn]{elsarticle}

%% For including figures, graphicx.sty has been loaded in
%% elsarticle.cls. If you prefer to use the old commands
%% please give \usepackage{epsfig}

%% The amssymb package provides various useful mathematical symbols
\usepackage{amssymb}
%% The amsthm package provides extended theorem environments
%% \usepackage{amsthm}

%% The lineno packages adds line numbers. Start line numbering with
%% \begin{linenumbers}, end it with \end{linenumbers}. Or switch it on
%% for the whole article with \linenumbers.
%% \usepackage{lineno}

% added by authors
\usepackage{lineno,hyperref}
\usepackage{amsmath}
\usepackage{amsfonts}
\usepackage{algorithm}
\usepackage{algorithmic}
\usepackage{array}
\usepackage{graphicx}
\usepackage{subfigure}
\usepackage{booktabs}
\usepackage{color}
\usepackage{multirow}

%
% These are are recommended to typeset listings but not required. See the subsubsection on listing. Remove this block if you don't have listings in your paper.
\usepackage{newfloat}
\usepackage{listings}

\newcommand{\argmin}{\operatornamewithlimits{arg\,min}}
\newcommand{\BlackBox}{\rule{1.5ex}{1.5ex}}  % end of proof
\newenvironment{proof}{\par\noindent{\bf Proof\ }}{\hfill\BlackBox\\[2mm]}
\newtheorem{assumption}{Assumption}
 
\newtheorem{theorem}{Theorem}
\newtheorem{lemma}{Lemma} 
\newtheorem{proposition}{Proposition} 
\newtheorem{remark}{Remark}
\newtheorem{corollary}{Corollary}
\newtheorem{definition}{Definition}

\def\SS{\boldsymbol S}
\def\WW{\boldsymbol W}
\def\XX{\boldsymbol X}
\def\OO{\boldsymbol \Omega}
\def\oo{\boldsymbol \omega}
\def\xx{\boldsymbol x}
\def\yy{\boldsymbol y}

\def\ee{\boldsymbol \epsilon}

% correct bad hyphenation here
% \\hyphenation{op-tical net-works semi-conduc-tor}

\journal{Pattern Recognition}

\begin{document}

\begin{frontmatter}

\author[1]{Jian Li\corref{cor1}}
\ead{lijian9026@iie.ac.cn}

\author[2]{Yong Liu}
\ead{liuyonggsai@ruc.edu.cn}

\author[1]{Weiping Wang}
\ead{wangweiping@iie.ac.cn}

\cortext[cor1]{Corresponding author}

\address[1]{Institute of Information Engineering, Chinese Academy of Sciences.}

\address[2]{Gaoling School of Artificial Intelligence, Renmin University of China.}

\title{Semi-supervised Vector-valued Learning: \\Improved Bounds and Algorithms}

\begin{abstract}
  Vector-valued learning, where the output space admits a vector-valued structure, is an important problem that covers a broad family of important domains, e.g. multi-task learning and transfer learning. Using local Rademacher complexity and unlabeled data, we derive novel semi-supervised excess risk bounds for general vector-valued learning from both kernel perspective and linear perspective. The derived bounds are much sharper than existing ones and the convergence rates are improved from the square root of labeled sample size to the square root of total sample size or directly dependent on labeled sample size. Motivated by our theoretical analysis, we propose a general semi-supervised algorithm for efficiently learning vector-valued functions, incorporating both local Rademacher complexity and Laplacian regularization. Extensive experimental results illustrate the proposed algorithm significantly outperforms the compared methods, which coincides with our theoretical findings.
\end{abstract}

\begin{keyword}
  Vector-valued Learning, Semi-supervised Learning, Excess Risk Bound, Local Rademacher Complexity.
\end{keyword}

\end{frontmatter}

\section{Introduction}
\label{sec:introduction}

Learning vector-valued functions involves learning a predictive model from training data that has vector-valued rather than scalar-valued labels. This encompasses a wide range of important tasks, such as multi-class classification \cite{torralba2007sharing,cortes2013multi}, multi-label learning \cite{zhang2014lift,qian2022weight}, multi-task learning \cite{ji2013multitask}, transfer learning \cite{ye2021implementing} and so on.

On the algorithmic front, various models have been developed to address special cases of vector-valued learning \cite{tschumperle2005vector}, with a particular focus on multi-class classification and multi-label learning. 
Multi-class classification is a standard learning paradigm classifying instances into one of more than two classes. Conventional multi-class classification approaches include error-correcting output codes (ECOC) \cite{liu2022novel}, multiclass support vector machine (multiclass SVM) \cite{hsu2002comparison,cortes2013multi,chen2022online}, and more algorithms refer to \cite{duan2021oaa}.
Meanwhile, multi-label learning assigns multiple labels for each instance simultaneously, including multi-label learning with missing labels \cite{yu2014large}, semi-supervised multi-label learning \cite{mojoo2017deep},  extreme multi-label classification \cite{wydmuch2018no}, partial multi-label learning \cite{xie2021partial}, and more examples in \cite{liu2021emerging}.
For vector-valued functions which admit a reproducing kernel, 
\cite{carmeli2010vector} presented an algorithm to learn the reproducing kernel Hilbert space (RKHS), and then \cite{quang2013unifying} extended it to semi-supervised learning via manifold regularization.
However, the learning framework for general vector-valued tasks has been scarcely studied.

On the theoretical front, the statistical learning theory for vector-valued functions suggests that estimating the generalization ability of algorithms is key to understanding the factors that affect their performance and developing ways to improve them \cite{bartlett2002rademacher}.
The statistical learning theories for special cases of vector-valued functions (e.g. multi-class classification and multi-label learning) have, by now, been well-developed \cite{cortes2013multi,yu2014large,lei2019data}. 
For instance, the convergence rates of the generalization error bounds for multi-class classification and multi-label learning are $\mathcal{O}(K/\sqrt{n})$ and $\mathcal{O}(1/\sqrt{n})$, respectively, where $K$ is the size of the vector-valued output and $n$ is the number of labeled samples.
However, despite its importance, theoretical properties for vector-valued functions have been only scarcely studied.
Theoretical results from recent works \cite{cortes2016structured,maurer2016vector,wu2021fine} on vector-valued functions employed the contraction inequality to estimate the global Rademacher complexity of the estimators,
while the rates of their bounds are at best $\mathcal{O}(1/\sqrt{n}).$

As shown in Figure \ref{fig.structure}, our main contributions lie in both theory and algorithm.
In this paper, we first define the general schema for vector-valued learning $f: \mathbb{R}^d \to \mathbb{R}^K$ by a linear estimator $f(\xx)=\WW^T\phi(\xx)$ with a nonlinear feature mapping $\phi: \mathbb{R}^d \to \mathcal{S}$.
The general schema includes a wide range of machine learning algorithms, such as kernel methods, (deep) neural networks, random features, generalized linear models and so on.
Then, we derive novel data-dependent generalization error bounds by making use of local Rademacher complexity and unlabeled data for vector-valued learning.
A unified learning framework is further designed and solved by proximal gradient descent.
Extensive experiments verify the effectiveness of the algorithm and support the statistical findings.

\textbf{Theoretical contributions.}
We provide theoretical guarantees for learning vector-valued functions in both the kernel space and linear space.
Instead of global Rademacher complexity, we exploit \textit{local} Rademacher complexity to improve the convergence rate of excess risk bounds from $\mathcal{O}(1/\sqrt{n})$ to $\mathcal{O}(1/n)$, where $n$ is the number of labeled examples.
Unlabeled samples are used to reduce the estimate of Rademacher complexity and the learning rate is improved from $\mathcal{O}(1/\sqrt{n})$ to $\mathcal{O}(1/\sqrt{n+u} + 1/n)$, where $u$ is the number of unlabeled examples. 
Finally, we establish the unified excess risk bounds, for which we obtain, for the first time, data-dependent error bounds for semi-supervised vector-valued learning from both the kernel perspective and linear perspective.
To the best of our knowledge, these excess risk bounds are the tightest bounds developed so far for vector-valued learning and can be applied to various vector-valued tasks.

\textbf{Algorithmic contributions.}
Motivated by our theoretical analysis, we devise a unified learning framework for vector-valued functions, covering multi-class classification, multi-label learning, multi-task learning, transfer learning and so on.
Meanwhile, the nonlinear feature mapping in the algorithm is very general, which can be the kernel, the neural network, random features and linear transformation.
The framework combines the empirical risk minimization (ERM) framework with two additional terms to bound local Rademacher complexity and makes use of unlabeled samples.
The tail sum of singular values of the weight matrix in the predictor is used to bound local Rademacher complexity.
Under semi-supervised settings, Laplacian regularization is introduced to make use of unlabeled samples, making the algorithm suitable for these settings.
Using proximal gradient descent and nonlinear feature mappings, the algorithm effectively solves the non-differentiable optimization problem on the primal, achieving a good tradeoff between efficiency and accuracy.

The rest of the paper is organized as follows.
We begin with the related work and the improvements on our previous papers in Section \ref{sec.related_work}.
Some preliminaries for vector-valued learning are introduced in Section \ref{sec.preliminaries}.
Then, we derive excess risk bounds for vector-valued learning in both the kernel and linear spaces in Section \ref{sec.theory}, and compare our bounds with existing vector-valued learning bounds in Section \ref{sec.theory_comparison}.
In Section \ref{sec.algorithm}, we present a unified learning framework for vector-valued learning.
We empirically validate the theoretical findings and algorithm in Section \ref{sec.experiment}.
Finally, we concluded this work in Section \ref{sec.conclusion}.

\section{Related Work}
\label{sec.related_work}
  In this section, we introduce the generalization theories of vector-valued learning and the improvements in our previous work.
  Recently, using the integral operator theory, \cite{ciliberto2020general} proved consistency and excess risk bounds for a general framework for structured prediction, and then \cite{brogat2022vector} further proposed generalization bounds for vector-valued least square regression.
  Meanwhile, \cite{li2021towards,wu2021fine} derived shaper generalization error bounds for structured prediction.
  Besides, there are also theoretical bounds for multi-view semi-supervised learning with Laplacian regularization \cite{sun2010sparse,sun2011multi} using Rademacher complexity. 
  
\subsection{Statistical Properties of Multi-class Classification}
  The generalization ability of multi-class classification has been analyzed by many papers.
  For example, \cite{daniely2015multiclass} used Natarajan dimension to tightly characterize the sample complexity of multiclass learning in the PAC setting, achieve the convergence rate $\mathcal{O}(\sqrt{d_N/n})$, where $d_N$ is the Natarajan dimension and $n$ is the number of labeled examples.
  Data-dependent complexity tools, on the other hand, always yield tighter bounds.
  As the most common and successful data-dependent measure, Rademacher complexity was first used to analyze the generalization ability of multiclass SVM in \cite{koltchinskii2001some,cortes2013multi}, of which the convergence rates are usually $\mathcal{O}(K/\sqrt{n}),$ where $K$ is the number of classes.
  By bridging Gaussian complexity and Rademacher complexity, \cite{lei2019data} devised a generalization error bound which exhibits logarithmic dependence on the class size $\mathcal{O}((\log K)/\sqrt{n})$.
  Under the relatively strict assumption that the derivative of the loss function is $L$-Lipschitz continuous, our previous work \cite{li2018multi} proposed the state-of-the-art error bound for kernel-based multi-class classification using local Rademacher complexity, where the convergence rate is inversely proportional to the number of samples $n.$
  %Based on the fact that the estimate of Rademacher complexity is output-independent, researchers began to consider how to make use of unlabeled samples.
  Combining Rademacher complexity with the use of unlabeled data, Maximov et al. proposed a semi-supervised multi-class bound \cite{maximov2018rademacher}, which has a convergence rate of $\mathcal{O}\big(\sqrt{{K}/{n}}+K\sqrt{{K}/{u}}\big),$ where $u$ is the number of unlabeled samples.
  Further, based on local Rademacher complexity, we extended our previous work \cite{li2018multi} to semi-supervised settings in \cite{li2019multi} and obtained a tight semi-supervised bound, of which the rate is $\mathcal{O}\big({K}/{\sqrt{n+u}} + {1}/{n}\big).$
  Beyond kernel methods, there are a line of work on deep neural networks for multiclass classification that proved gradient descent on cross-entropy may converge to the max-margin solution with zero training loss \cite{lyu2021gradient}, while \cite{wang2021benign} proven several multiclass algorithms lead to interpolation and their equivalence.
  
\subsection{Statistical Properties of Multi-label Learning}
  The consistency of multi-label learning, whether the expected loss converges to the Bayes loss as the number of training examples increases, has been studied in \cite{gao2011consistency} for ranking loss and hamming loss.
  Recently, \cite{wydmuch2018no} proved the consistency of probabilistic label trees (PLTs) for extreme multi-label classification.
  The generalization bounds are mainly studied based on the Rademacher complexity.
  For example, using the global Rademacher complexity, \cite{yu2014large} derived the generalization error bounds for low-rank linear multi-label model in the standard empirical minimization error (ERM) framework where the convergence rate is $\mathcal{O}(1/\sqrt{n}),$ and then it demonstrated the superiority of low-rank promoting trace-norm regularization over Frobenius regularization.
  \cite{xu2016local} proposed to minimize the tail sum of the singular values of the predictor in multi-label learning and obtain a faster convergence rate $\mathcal{O}(1/n)$ using the local Rademacher complexity.
  Both the theoretical analysis and proposed algorithms of \cite{yu2014large,xu2016local} are in the linear space,
  while this paper explores statistical properties and designs a unified algorithm for both the kernel space and linear space.
  
  \begin{figure}[t]
    \begin{center}
      \includegraphics[width=0.7\linewidth]{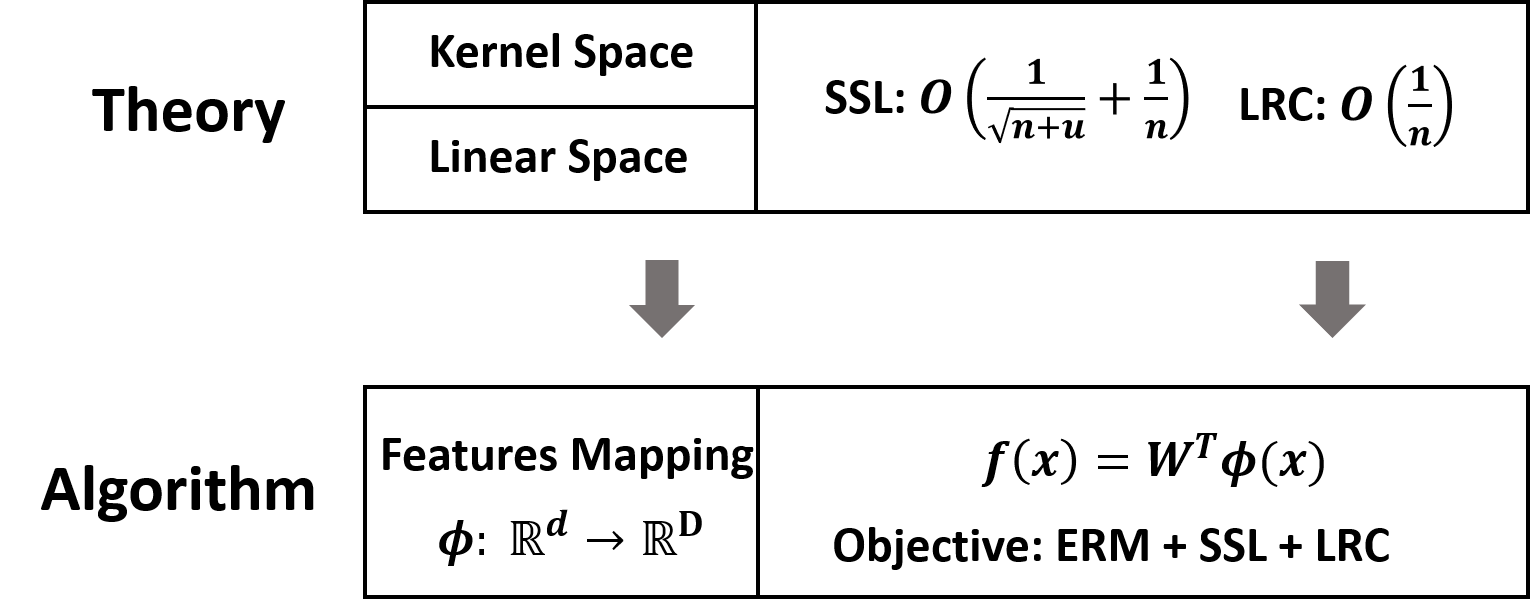}
    \end{center}
    \caption{Main contributions of this paper. 
    We derive excess risk bounds for vector-valued learning in both kernel space and linear space, which achieve the learning rate $\mathcal{O}(1/\sqrt{n+u} + 1/n)$.
    Driven by the theoretical findings, we propose an efficient semi-supervised vector-valued learning algorithm with nonlinear feature mappings and the local Rademacher complexity Regularization term.
    }
    \label{fig.structure}
  \end{figure}

\subsection{Improvements on Our Previous Work}
  Our previous works \cite{li2018multi,li2019multi}, published in {NeurIPS} and {IJCAI}, provided origin ideas on the generalization analysis of multi-class classification with local Rademacher complexity.
  We then emphasize the contributions of this work in relation to the previous ones, by illustrating the shortcomings of previous work and major improvements in this work.
  
  \begin{table}[t]
    \caption{Novel contributions of this work. MC represents multi-class classification. SS-MC represents semi-supervised MC. SS-VV represents semi-supervised vector-valued learning. MKL-MC represents multiple kernel learning for MC.}
    \label{tab.contribution}
    \centering
    \small
    \begin{tabular}{l|llllll}
        \toprule
        Reference &Task   & Hypothesis   & Condition & Rate & Algorithm       \\ \hline
        \cite{li2018multi} &MC  &Kernel &Smoothness  & $\mathcal{O}(\log K / n)$ & MKL-MC \\
        \cite{li2019multi} &SS-MC  &Linear &Smoothness  & $\mathcal{O}(\frac{K}{\sqrt{n + u}} + \frac{1}{n})$ & Linear estimator \\
        This paper &SS-VV  &Linear \& Kernel  &Continuity  & $\mathcal{O}(\frac{1}{\sqrt{n + u}} + \frac{1}{n})$ & Approximate  kernel \\
        \bottomrule
    \end{tabular}
  \end{table}

  In the previous work \cite{li2018multi}, we improved the generalization error bounds for \textit{multi-class classification} in the kernel space, while \cite{li2019multi} provided the generalization analysis for semi-supervised multi-class classification in the linear space.
  However, previous literature shows several shortcomings in theoretical guarantees and the effectiveness of algorithms:
  1) {Theoretical shortcomings.} 
  The theoretical findings in \cite{li2018multi,li2019multi} only work for multi-class classification and fail to apply to more general vector-valued cases.
  Meanwhile, the proofs in \cite{li2018multi} for kernelized multi-class classification are rather complicated, while the proofs in \cite{li2019multi} is hard to apply to kernel settings.
  2) {Algorithmic shortcomings.}
  \cite{li2018multi} employed multiple kernel learning for multi-class classification (MKL-MC) that is inefficient for large scale tasks, while the linear classifier used in \cite{li2019multi} often leads to inferior performance.
  To overcome these drawbacks, as shown in Table \ref{tab.contribution}, we make the following significant improvements:
  
  \textbf{1) Theoretical guarantees for more general cases.} We provide an unified theoretical results for general semi-supervised vector-valued tasks (i,e, multi-label learning, multi-task learning, transfer learning and co-kriging) in both kernel space and linear space, while the previous literature only pertains for the MKL multi-class classifier \cite{li2018multi} or the linear multi-class classifier \cite{li2019multi}.
  
  \textbf{2) Milder condition.} We use a milder assumption for the loss function.  Our earlier work \cite{li2018multi} assumed the loss function to be $L$-smooth, while this paper just needs $L$-Lipschitz continuous condition on the loss function, covering much loss functions.

  \textbf{3) Concise proof details for kernel classifiers.} Using the contraction inequality (Lemma \ref{lem.contraction}) to directly bound local Rademacher complexity of hypothesis space, we simply the proof of sharper analysis, while \cite{li2018multi} employed a complex derivation from the Gaussian complexity to the local Rademacher complexity.
  
  \textbf{4) Good tradeoff between accuracy and efficiency.} 
  We proposed an efficient algorithm for vector-valued functions in approximate kernel space using the nonlinear feature mapping space.
  Specifically, the nonlinear feature mapping characterizes excellent predictive ability in the manner of linear classifiers, avoiding the low efficiency of multiple kernel learning (MKL-MC) \cite{li2018multi} and inferior performance of the linear classifier \cite{li2019multi}.

  \textbf{5) More experiments on vector-valued tasks.} 
  To validate our theoretical findings, we perform more experiments on valued-valued learning tasks.
  Beside multi-class classification tasks \cite{li2018multi,li2019multi}, multi-label learning datasets are also used in this paper.
  Additionally, we explore the influences of using local Rademacher complexity and unlabeled samples.

\section{Problem Setting and Preliminaries}
\label{sec.preliminaries}
We define vector-valued problems on input space $\mathcal{X} = \mathbb{R}^d$
and output space $\mathcal{Y},$ which produces vector-valued outputs $\mathcal{Y} \subseteq \mathbb{R}^K$ (such as multivariate labels).
We consider a set of labeled training samples $\mathcal{D}_l=\{(\xx_i, ~ \yy_i)\}_{i=1}^n$
i.i.d. drawn from some unknown distribution $\rho$ over $\mathcal{X} \times \mathcal{Y}$
and unlabeled samples $\mathcal{D}_u = \{\xx_i\}_{i=n + 1}^{n + u}$ i.i.d. sampled according to the marginal distribution $\rho_X$ of $\rho$ over $\mathcal{X}.$
Typically, only a few labeled samples and a large number of unlabeled samples are available, that is, $n \ll u.$

\subsection{The Vector-valued Learning Framework}
The goal is to learn a vector-valued estimator $h: \mathbb{R}^{d} \to \mathbb{R}^K,$ which outputs $K$-dimensional labels.
We define a general hypothesis space for both kernel-based and linear methods
\begin{align}
  \label{eq.hypothesis}
  \mathcal{H}_p = \left\{\xx \to h(\xx)=\WW^T\phi(\xx): \|\WW\|_p \leq 1 \right\},
\end{align}
where $\WW \in \mathcal{S} \times \mathbb{R}^{K}$ is the weight matrix, $\phi(\xx): \mathbb{R}^d \to \mathcal{S}$ is a feature mapping (linear or non-linear), $\mathcal{S}$ is the feature space and $\|\WW\|_p$ is a matrix norm to regularize the hypothesis.
Specifically, to analyze the generalization performance of vector-valued functions, we use the trace norm $\|\WW\|_* \leq 1$ in both the kernel space and the linear space.
Then, we present specific estimators for the linear space and kernel space.

We denote the loss function $\ell: \mathcal{Y} \times \mathcal{Y} \to \mathbb{R}_+$ to measure the dissimilarity between two elements from vector-valued outputs.
The target of statistical learning is to minimize the expected loss
\begin{align*}
 \mathcal{E}(h) = \int_{\mathcal{X} \times \mathcal{Y}} \ell(h(\xx), ~ \yy) d \rho(\xx, ~ \yy),
\end{align*}
where $\ell$ is the loss function and $h \in  \mathcal{H}_p.$
The empirical loss is usually defined as $\widehat{\mathcal{E}}(h)=\frac{1}{n}\sum_{i=1}^n ~ \ell(h(\xx_i), ~ \yy_i).$
For the sake of simplicity, we assume that the loss function is bounded $\ell: \mathcal{Y} \times \mathcal{Y} \to [0, B]$, where $B > 0$ is a constant.
This is a common restriction on the loss function, satisfied by the bounded hypothesis.
Moreover, we normalize the inner product $\langle \phi(\xx), \phi(\xx') \rangle \leq 1$, so we have $\sup_{\xx \in \mathcal{X}} \kappa(\xx,\xx) \leq 1$ for kernel estimators and $\mathbb{E} [\xx^T\xx] \leq 1$ for linear estimators.
The definition of hypothesis space \eqref{eq.hypothesis} involves several kinds of classifiers in terms of different feature mappings $\phi$:

\textbf{1) Kernel methods.}
Let $\kappa: \mathcal{X} \times \mathcal{X} \to \mathbb{R}$ be a Mercer kernel with the associated feature map $\phi$ and reproducing kernel Hilbert space $H_\kappa,$ where $\kappa(\xx, \xx') = \langle \phi(\xx), \phi(\xx')\rangle$ and $\phi: \mathbb{R}^d \to H_\kappa,$ thus $\mathcal{S} = H_\kappa.$

\textbf{2) Approximate kernel approaches.}
Random features (RF) \cite{rahimi2007random}, Nystr\"om approximation \cite{williams2001using} and randomized sketches \cite{yang2017randomized} can be seen as kernel approximation via $\kappa(\xx, \xx') \approx \langle \phi(\xx), \phi(\xx') \rangle,$ where $\phi: \mathbb{R}^d \to \mathbb{R}^D$ is an explicit feature mapping and $\mathcal{S} = \mathbb{R}^D.$

\textbf{3) (Deep) neural networks.} Supervised deep neural network can be divided into two parts: the nonlinear feature mappings $\phi: \mathbb{R}^d \to \mathbb{R}^D$ before the last layer and the linear estimator $h: \mathbb{R}^D \to \mathbb{R}^K$ with $h(\xx) = \WW^T\phi(\xx)$ in the last layer.
Assume there are $L$ layers in the neural network, and then the feature mapping can be written as $\phi(\cdot) = \phi_{L-1}(\phi_{L-2}(\cdots, \phi_1(\cdot) ,\cdots))$.
It can be seen as a deep kernel function via multiple layers.

\textbf{4) Linear classifiers.}
The commonly used linear estimators are directly in the input space $\phi(\xx)=\xx$ and $\mathcal{S} = \mathbb{R}^d.$

\subsection{Notations and Assumptions}
The space for the loss functions associated with $ \mathcal{H}_p$ is
\begin{align}
 \label{eq.loss-space}
 \mathcal{L}=\left\{\ell(h(\xx), ~ \yy) ~ \big | ~ h\in \mathcal{H}_p\right\}.
\end{align}
\begin{definition}[Rademacher complexity of the loss space]
 \label{def.rc-loss}
 Assume $\mathcal{L}$ is the space for loss functions defined in Equation \eqref{eq.loss-space}.
 Then the empirical Rademacher complexity of $\mathcal{L}$ on $\mathcal{D}_l$ is:
 \begin{align}
  \label{eq.grc-loss}
 &\widehat{\mathcal{R}}(\mathcal{L})=
 \frac{1}{n} ~ \mathbb{E}_{\epsilon}
 \left[\sup_{\ell\in\mathcal{L}}
 \sum_{i=1}^n\epsilon_i \ell (h(\xx_i), ~ \yy_i)
 \right],
 \end{align}
 where $\epsilon_i$s are random independent Rademacher variables uniformly distributed over $\{\pm1\}.$
 Its deterministic counterpart is $\mathcal{R}(\mathcal{L})=\mathbb{E} ~ \widehat{\mathcal{R}}(\mathcal{L}).$
\end{definition}

\begin{definition}[Local Rademacher complexity of loss space]
 \label{def.lrc-loss}
 For any $r>0,$ local Rademacher complexity of $\mathcal{L}$ is
 \begin{align}
  \label{eq.lrc-loss}
 \mathcal{R}(\mathcal{L}_r)=
 \mathcal{R}\left(\left\{
 \ell_h ~\big|~
 \ell_h \in \mathcal{L}, ~ \mathbb{E} ~ (\ell_h - \ell_{h^*})^2 \leq r
 \right\}\right),
 \end{align}
 where $\ell_{h^*}$ represents the minimal expected loss.
 \end{definition}
 From (\ref{eq.grc-loss}) to (\ref{eq.lrc-loss}), a smaller class $\mathcal{L}_r\subseteq\mathcal{L}$ is selected
 by a ball around the minimal expected loss $\ell_{h^*}$ with a fixed radius $r.$
 The corresponding localized hypothesis space is
 \begin{align}
 \label{eq.localized-hypothesis-space}
 \mathcal{H}_r = \{h ~ \big | ~ h \in  \mathcal{H}_p, ~ \mathbb{E} ~ (\ell_h - \ell_{h^*})^2 \leq r\}.
 \end{align}

 Definitions \ref{def.rc-loss} and \ref{def.lrc-loss} demonstrate that Rademacher complexity of the loss space is output-dependent, such that the empirical counterparts can be estimated only on the labeled samples $\mathcal{D}_l.$
 In the following definition, we introduce the notion of Rademacher complexity of the hypothesis space, which is output-independent and can be estimated on both the labeled samples $\mathcal{D}_l$ and unlabeled samples $\mathcal{D}_u.$
\begin{definition}[Local Rademacher complexity of hypothesis space]
 Assume that the localized hypothesis space $\mathcal{H}_r$ is defined as in (\ref{eq.localized-hypothesis-space}).
 The empirical local Rademacher complexity of $\mathcal{H}_r$ on both labeled and unlabeled samples $\mathcal{D}_l \cup \mathcal{D}_u$ is defined as:
 \begin{align*}
 \widehat{\mathcal{R}}(\mathcal{H}_r) =
 \frac{1}{n + u} ~ \mathbb{E}_{\epsilon}
 \left[\sup_{h \in\mathcal{H}_r}
 \sum_{i=1}^{n + u} \sum_{j=1}^K \ee_{ik} h_j(\xx_i)
 \right],
 \end{align*}
 where $h_j(\xx_i)$ is the $j$-th value in the vector-valued function $h(\xx_i)$ with $K$ outputs and $\ee_{ik}$s are $(n + u) \times K$ Rademacher variables.
 The deterministic counterpart is $\mathcal{R}(\mathcal{H}_r)=\mathbb{E} ~ \widehat{\mathcal{R}}(\mathcal{H}_r).$
\end{definition}

% \subsection{Assumptions}
Consider the loss function $\ell$ is bounded $\ell \in [0, B]$ and satisfies the following condition.

\begin{assumption}
  \label{ass.lipschitz}
  We assume that the loss function $\ell: \mathcal{Y} \times \mathcal{Y} \to \mathbb{R}_+$ is $L$-Lipschitz continuous for $\mathbb{R}^K$ equipped with the $\ell_2$-norm.
  There holds
  \begin{align*}
  |\ell(h(\xx), ~ \yy) - \ell(h'(\xx'), ~ \yy)| \leq L\|h(\xx) - h'(\xx')\|_2,
  \end{align*}
  where $(\xx, \yy) \in \mathcal{X} \times \mathcal{Y}, ~ \forall ~\xx' \in \mathcal{X}$ and $h,h': \mathcal{X} \to \mathcal{Y}.$
 \end{assumption}
This assumption is standard in vector-valued learning and can be extended to structured prediction \cite{cortes2016structured}.
Using the Lipschitz condition and contraction lemma for Rademacher complexity of vector-valued learning proven in \cite{cortes2016structured,maurer2016vector}, we further establish the connection between local Rademacher complexity of the loss space and hypothesis space.

\section{Theoretical Analysis}
\label{sec.theory}
In this section,
we study the generalization ability of vector-valued learning.
Firstly, a general excess risk bound is derived using local Rademacher complexity and unlabeled samples.
Then, for the kernel hypotheses, an estimate of local Rademacher complexity is explored based on the eigenvalues decomposition of the normalized kernel matrix.
Thus, an explicit excess risk bound is derived.
Meanwhile, for the linear hypotheses, we bound local Rademacher complexity based on singular values decomposition of the weight matrix $\WW$ and then provide an explicit excess risk bound.
Our analysis is general and applicable to a broad family of vector-valued functions, as long as the loss function is Lipschitz continuous and bounded.

\subsection{General Bound for Local Rademacher Complexity}
\begin{lemma}[Lemma 5 of \cite{cortes2016structured}]
 \label{lem.contraction}
 Under Assumption \ref{ass.lipschitz}, the following contraction inequality exists
 \begin{align*}
 \mathcal{R} (\mathcal{L}_r) \leq \sqrt{2} L \mathcal{R}(\mathcal{H}_r).
 \end{align*}
\end{lemma}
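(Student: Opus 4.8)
The statement is the localized, vector-valued form of the Ledoux--Talagrand contraction principle, so the plan is to reduce it to one application of Maurer's vector-contraction inequality (the form quoted as Lemma~5 of \cite{cortes2016structured}) after reconciling the two localized classes. First I would note that the definitions \eqref{eq.lrc-loss} and \eqref{eq.localized-hypothesis-space} make $h\mapsto\ell_h$ a bijection between $\mathcal{H}_r$ and $\mathcal{L}_r$: $\ell_h\in\mathcal{L}_r$ holds exactly when $\mathbb{E}(\ell_h-\ell_{h^*})^2\leq r$, which is exactly the condition $h\in\mathcal{H}_r$. So, writing out $\widehat{\mathcal{R}}(\mathcal{L}_r)$ from Definition~\ref{def.rc-loss}, its inner supremum over $\ell\in\mathcal{L}_r$ equals a supremum over $h\in\mathcal{H}_r$ of $\tfrac1n\sum_{i=1}^n\epsilon_i\,\ell(h(\xx_i),\yy_i)$, and it is this quantity I would feed into the contraction inequality.

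Next, condition on the labeled sample $\mathcal{D}_l$ and, for each $i$, regard $\ell(\cdot,\yy_i):\mathbb{R}^K\to\mathbb{R}_+$ as a fixed scalar map of the vector argument $h(\xx_i)$. Assumption~\ref{ass.lipschitz} says precisely that each such map is $L$-Lipschitz for the Euclidean norm on $\mathbb{R}^K$, which is the hypothesis Maurer's inequality requires. (If one wants the ``ball around $\ell_{h^*}$'' structure to stay visible, one may first subtract the fixed sequence $i\mapsto\ell(h^*(\xx_i),\yy_i)$ inside the supremum; this is harmless since $\mathbb{E}_\epsilon\sum_i\epsilon_i c_i=0$ for constants $c_i$.) Applying the vector-contraction inequality to the class $\mathcal{H}_r$ then gives, with the dimension-free constant $\sqrt2$,
\begin{equation*}
\mathbb{E}_\epsilon\sup_{h\in\mathcal{H}_r}\sum_{i=1}^n\epsilon_i\,\ell(h(\xx_i),\yy_i)\ \leq\ \sqrt2\,L\;\mathbb{E}_\epsilon\sup_{h\in\mathcal{H}_r}\sum_{i=1}^n\sum_{j=1}^K\ee_{ij}\,h_j(\xx_i),
\end{equation*}
the single Rademacher sequence on the left being replaced by an $n\times K$ array on the right. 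Dividing by $n$ and taking expectation over $\mathcal{D}_l$ yields $\mathcal{R}(\mathcal{L}_r)\leq\sqrt2\,L\,\mathcal{R}(\mathcal{H}_r)$, with the hypothesis-space complexity on the right read off the $n$ labeled inputs.

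The remaining point is bookkeeping rather than an obstacle: the definition of $\widehat{\mathcal{R}}(\mathcal{H}_r)$ averages over all $n+u$ points, while the contraction step produced the average over the $n$ labeled ones. This is not a problem for the sequel, because $\mathcal{H}_r$ is carved out by a population constraint and is therefore label-free, so every later bound on $\mathcal{R}(\mathcal{H}_r)$ — via the eigenvalues of the normalized Gram matrix in the kernel case and via the singular values of $\WW$ in the linear case — is an estimate of this label-free quantity that is legitimately, and more tightly, formed from the full unlabeled-augmented sample; it is through those bounds, not through this lemma, that $n+u$ enters the excess-risk estimates. The only genuinely non-trivial ingredient is the vector-contraction inequality itself, i.e.\ the constant $\sqrt2$ and the inflation from $n$ to $nK$ Rademacher variables, which I would simply cite from \cite{cortes2016structured}; a self-contained proof is the delicate part, since a naive Gaussian-comparison route through Slepian's lemma incurs a spurious logarithmic factor, and one instead needs Maurer's elementary coordinate-by-coordinate induction together with a pointwise inequality to get the clean $\sqrt2$.
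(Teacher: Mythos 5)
The paper offers no proof of this lemma at all---it simply cites Lemma 5 of Cortes et al.\ and Theorem 3 of Maurer et al.---and your proposal takes exactly the same route, deferring the $\sqrt{2}L$ vector-contraction step to those references while correctly spelling out the reduction (the identification of $\mathcal{L}_r$ with $\mathcal{H}_r$ and the harmless centering at $\ell_{h^*}$). Your observation that contraction produces the hypothesis-class complexity over the $n$ labeled points, whereas $\mathcal{R}(\mathcal{H}_r)$ is defined over all $n+u$ points, flags a normalization mismatch the paper leaves entirely implicit; since the $m$-point Rademacher complexity of a fixed class is non-increasing in $m$, the inequality with the $(n+u)$-point quantity on the right does not follow from contraction alone, so your reading---that the $n+u$ gain legitimately enters only through the later label-free eigenvalue and singular-value estimates of $\mathcal{R}(\mathcal{H}_r)$, not through this lemma---is the more careful one.
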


The contraction lemma above has been proven based on Khintchine inequalities in Lemma 5 of \cite{cortes2016structured} and Theorem 3 of \cite{maurer2016vector}.
The contraction inequality in Lemma \ref{lem.contraction} is the key tool for analyzing vector-valued output functions, bridging the gap between Rademacher complexity of the loss space and hypothesis space.
We can then make use of unlabeled data because the richness measure of the hypothesis space is output-independent, always leading to tighter error bounds.

\begin{theorem}[Excess risk bound of vector-valued learning]
 \label{thm.general-lrc-bound}
 Assume the loss function satisfies Assumption \ref{ass.lipschitz}.
 Let $\psi(r)$ be a sub-root function and $r^*$ be the fixed point of $\psi.$
 Fix $\delta \in (0, 1)$ and assume that, for any $r \geq r^*,$
 \begin{align}
 \label{eq.lrc-bound.eq1}
 \psi(r) \geq \sqrt{2} B L \mathcal{R}(\mathcal{H}_r).
 \end{align}
 Then, with a probability of at least $1-\delta,$
 \begin{align}
 \label{eq.lrc-bound.eq2}
 \mathcal{E}(\widehat{h}) - \mathcal{E}(h^*)
 \leq \frac{705}{B}r^* + \frac{49B\log(1/\delta)}{n},
 \end{align}
 where $\widehat{h}, h^*$ are the estimators with the minimal empirical loss and the minimal expected loss, respectively.
\end{theorem}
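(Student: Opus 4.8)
The plan is to apply the local Rademacher complexity theory of Bartlett, Bousquet and Mendelson \cite{bartlett2005local} to the \emph{shifted loss class} $\{\ell_h-\ell_{h^*}\}$, using the contraction inequality of Lemma~\ref{lem.contraction} to carry the sub-root hypothesis over onto the loss space, and then to invoke the defining property of the empirical risk minimizer $\widehat h$ to eliminate the empirical term.

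First I would set $g_h:=\ell_h-\ell_{h^*}$ and consider $\mathcal{G}=\{g_h:h\in\mathcal{H}_p\}$ (replacing $\mathcal{G}$ by its star hull around $0$ if needed, which does not change the localized complexities by more than an absolute factor since $\psi$ is sub-root). As $\ell$ ranges over $[0,B]$, each $g_h$ is bounded and $\mathbb{E}[g_h^2]=\mathbb{E}[(\ell_h-\ell_{h^*})^2]$, so with variance functional $T(g_h)=\mathbb{E}[g_h^2]$ the sub-level set $\{g_h:T(g_h)\le r\}$ is exactly the shifted localized loss class $\mathcal{L}_r$ of Definition~\ref{def.lrc-loss}. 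Adding the fixed function $-\ell_{h^*}$ leaves every empirical Rademacher average unchanged, because $\mathbb{E}_\epsilon\sum_i\epsilon_i\,\ell_{h^*}(\xx_i,\yy_i)=0$; hence the Rademacher complexity of this sub-level set equals $\mathcal{R}(\mathcal{L}_r)$, and Lemma~\ref{lem.contraction} combined with assumption \eqref{eq.lrc-bound.eq1} yields, for every $r\ge r^*$,
\begin{align*}
 B\,\mathcal{R}\big(\{g_h:\mathbb{E}[g_h^2]\le r\}\big)=B\,\mathcal{R}(\mathcal{L}_r)\le\sqrt{2}\,BL\,\mathcal{R}(\mathcal{H}_r)\le\psi(r).
\end{align*}
This is precisely the structural condition needed to run the local Rademacher theorem on $\mathcal{G}$ (range of order $B$, variance proxy $T$). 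The essential point is that Lemma~\ref{lem.contraction} trades the output-dependent loss-space complexity, estimable only on the $n$ labeled points, for the output-independent hypothesis-space complexity $\mathcal{R}(\mathcal{H}_r)$, which can be estimated on all $n+u$ labeled and unlabeled points in the subsequent kernel- and linear-space analyses.

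Next I would invoke Talagrand's concentration inequality together with the peeling argument over the sub-root function (the content of Theorem~3.3 and its excess-risk corollary in \cite{bartlett2005local}): with probability at least $1-\delta$, simultaneously for all $h\in\mathcal{H}_p$,
\begin{align*}
 \mathcal{E}(h)-\mathcal{E}(h^*)\le c\big(\widehat{\mathcal{E}}(h)-\widehat{\mathcal{E}}(h^*)\big)+\frac{c_1}{B}\,r^*+\frac{c_2 B\log(1/\delta)}{n}
\end{align*}
for absolute constants $c>1$ and $c_1,c_2>0$ obtained by fixing the free multiplier in the theorem. Since $h^*\in\mathcal{H}_p$ and $\widehat h$ is the empirical loss minimizer over $\mathcal{H}_p$, we have $\widehat{\mathcal{E}}(\widehat h)-\widehat{\mathcal{E}}(h^*)\le 0$, so setting $h=\widehat h$ discards the empirical term, and a concrete choice of the free multiplier (together with the harmless $[-B,B]$-versus-$[0,B]$ range bookkeeping) produces the constants $705$ and $49$ in \eqref{eq.lrc-bound.eq2}. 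The main obstacle I anticipate is accounting rather than conceptual: carrying the star-hull reduction and the range of the shifted loss class through the Talagrand step precisely enough to reach exactly the stated numerical constants, and pinning down the sense in which the $n$-sample loss-space complexity bounded via Lemma~\ref{lem.contraction} is dominated by the $(n+u)$-sample hypothesis complexity appearing in \eqref{eq.lrc-bound.eq1}. Everything else follows from the two ingredients already in hand --- the contraction inequality and the local Rademacher bound.
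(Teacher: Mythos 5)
Your proposal is correct and follows essentially the same route as the paper's proof: both apply the first part of Theorem 3.3 of \cite{bartlett2005local} to $f=\ell_{\widehat h}-\ell_{h^*}$ with variance functional $T(f)=Pf^2$ and $\alpha=B$, use Lemma~\ref{lem.contraction} to convert the sub-root condition on $\mathcal{R}(\mathcal{L}_r)$ into the stated condition on $\sqrt{2}BL\,\mathcal{R}(\mathcal{H}_r)$, and drop the empirical term via $\widehat{\mathcal{E}}(\widehat h)-\widehat{\mathcal{E}}(h^*)\le 0$. The constant bookkeeping you describe ($c_1=704K/\alpha$, $c_2=11(a'-a)+26K\alpha$ with $a=-B$, $a'=B$ and $K$ slightly above $1$) is exactly how the paper arrives at $705/B$ and $49B$.
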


The above theorem provides a general excess risk bound for semi-supervised vector-valued functions based on local Rademacher complexity.
The classic local Rademacher complexity based bounds \cite{bartlett2005local} estimate the complexity on the loss space $\mathcal{R}(\mathcal{L}_r).$ Note that $\mathcal{R}(\mathcal{L}_r)$ is label-dependent so can only be estimated on labeled samples $\mathcal{D}_l,$ whose convergence rate is $\mathcal{O}(1/\sqrt{n}).$
In contrast, we estimate Rademacher complexity of the hypothesis space $\mathcal{R}(\mathcal{H}_r),$ which is label-independent and so it can be estimated on both labeled samples $\mathcal{D}_l$ and unlabeled samples $\mathcal{D}_u,$ where the convergence rate is $\mathcal{O}(1/\sqrt{n+u})$.
Therefore, using the contraction inequality in Lemma \ref{lem.contraction}, we introduce $\mathcal{R}(\mathcal{H}_r)$ instead of $\mathcal{R}(\mathcal{L}_r)$ to derive tighter error bounds.
\begin{remark}
When there is no labeled data, setting $u=0$ coincides with local Rademacher complexity bounds in supervised settings \cite{li2018multi}.
The convergence rate of these bounds depends on the $\mathcal{R}(\mathcal{H}_r)$ and $\mathcal{O}({1}/{n})$ terms, so it cannot be faster than $\mathcal{O}({1}/{n}).$
The number of unlabeled instances plays a key role in making the error bounds close to $\mathcal{O}({1}/{n})$ during the generalization analysis.
\end{remark}

\subsection{Excess Risk Bound From Kernel Perspective}
\label{subsection.kernel_bound}
In this section, we study the generalization performance of vector-valued functions with kernel hypotheses.
We first present an estimate of local Rademacher complexity on all data in Theorem \ref{thm.estimate-lrc-kernel}, which depends primarily on the tail sum of the eigenvalues of the normalized kernel matrix.
Then, we derive an explicit excess risk bound based on local Rademacher complexity (Corollary \ref{cor.lrc-bound-kernel}) for vector-valued functions with a faster convergence rate, by applying Theorem \ref{thm.estimate-lrc-kernel} to Theorem \ref{thm.general-lrc-bound}.

\begin{theorem}[Local Rademacher complexity for kernel estimators]
 \label{thm.estimate-lrc-kernel}
 Let $\mathcal{H}_r$ be the local hypotheses space defined in \eqref{eq.localized-hypothesis-space} and $\|\WW\|_* \leq 1$.
 Let eigenvalue decomposition be $\kappa(\xx, \xx') = \sum_{j=1}^\infty \lambda \psi_j(\xx)^T \psi_j(\xx')$, where its eigenvalues be $(\lambda_j)_{j=1}^{\infty}$ in a nonincreasing order.
 For any $r > 0,$ there holds
 \begin{align*}
 \mathcal{R}(\mathcal{H}_r)
 \leq 2\sqrt{\frac{1}{n + u}\min_{\theta \geq 0}\Big(\frac{\theta r}{4L^2} + \sum_{j > \theta} \lambda_j \Big)}.
 \end{align*}
\end{theorem}

Theorem \ref{thm.estimate-lrc-kernel} demonstrates that local Rademacher complexity is determined by the tail sum of eigenvalues, where the eigenvalues are truncated at the "cut-off point" $\theta.$
\begin{remark}
  Notably, local Rademacher complexity is independent from the number of classes $K$, because the constraints on $\WW \in \mathcal{S} \times \mathbb{R}^K$ (e.g. $\WW_* \leq 1$) are actually related to the dimensionality of the output space $K$. 
  When $K$ is bigger, the constraints are relatively stricter.
\end{remark}

\begin{corollary}[Excess risk bound for kernel estimators]
 \label{cor.lrc-bound-kernel}
 Assume the loss function satisfies Assumption \ref{ass.lipschitz} and kernel estimators satisfy $\sup_{\xx \in \mathcal{X}} \kappa(\xx,\xx) \leq 1$ and $\|\WW\|_* \leq 1$.
 With a probability of at least $1 - \delta,$ it holds that
 \begin{align}
  \label{eq.kernel_bound}
 \mathbb{E} [\mathcal{E}(\widehat{h}) - \mathcal{E}(h^*)]
 \leq~ c_{L,B} \left(r^* + \frac{\log (1/\delta)}{n}\right),
 \end{align}
 where, for the fixed point, it holds that
 \begin{align*}
 r^* \leq \min_{\theta \geq 0} \left(\frac{\theta}{n+u} + \sqrt{\frac{1}{n+u} \sum_{j > \theta} \lambda_j}\right),
 \end{align*}
 where $c_{L,B}$ a constant only depending on $L$ and $B$.
\end{corollary}

The convergence rate of the above bound depends on the quantity $\sqrt{\frac{1}{n+u} \sum_{j > \theta} \lambda_j}$, which can be estiamted as follows:

\textbf{1) The worst case (when $\theta = 0$).} The complexity degrades into the \textit{global} Rademacher complexity, depending on the trace of the kernel $\kappa.$
Then, the convergence rate of excess risk bound is $\mathcal{E}(\widehat{h}) - \mathcal{E}(h^*) = \mathcal{O}\left(\sqrt{\frac{1}{n + u}} + \frac{1}{n}\right)$.

\textbf{2) Finite-rank kernel.}
When the kernel $\kappa$ has a finite rank $\theta$ such that its eigenvalues satisfy $\lambda_j = 0$ for all $j > \theta$, the tail sum of eigenvalues is zero.
Indeed, a lot of common kernels are finite-rank kernels, e.g. the linear kernel and polynomial kernel.
The linear kernel $\kappa(\xx, \xx') = \langle\xx, \xx'\rangle$ has a rank of at most $\theta=d$.
For a polynomial kernel $\kappa(\xx, \xx') = (\langle \xx, \xx'\rangle + 1)^p$ with degree $p$, its rank is at most $\theta=p+1$.
Thus, the rate of the fixed point is inversely proportional to the number of samples, that is $r^* = \mathcal{O}\left(\frac{\theta}{n+u}\right).$

\textbf{3) Exponentially decaying eigenvalues.} 
When the eigenvalues of the normalized kernel matrix $\mathbf{K}$ decay exponentially $\sum_{j>\theta} \lambda_j = \mathcal{O}(\exp(-\theta)),$
such as for Gaussian kernels \cite{bartlett2005local}, then by truncating a thresholding with $\theta = \log (n+u)$
it holds that $r^* = \mathcal{O}\left(\frac{\log (n+u)}{n+u}\right).$

Both finite-rank kernels and kernels with exponentially decaying eigenvalues have an $r^*$ that mainly depends on $\mathcal{O}(1/(n+u)),$ which is much smaller than $\mathcal{O}(1/n).$
In these cases, the excess risk bound (\ref{eq.kernel_bound}) provides a linear dependence on the labeled sample size $\mathcal{E}(\widehat{h}) - \mathcal{E}(h^*) = \mathcal{O}\Big(\frac{1}{n}\Big)$,
yielding much stronger generalization bounds.
A similar analytical procedure is also used in classic local Rademacher complexity literature \cite{bartlett2005local,xu2016local}.

\subsection{Excess Risk Bound From Linear Perspective}
In this section, we study the local Rademacher complexity bound for $h(\xx) = \WW^T\phi(\xx),$ using the singular values decomposition (SVD) of the weight matrix $\WW.$
The result (Theorem \ref{thm.estimate-lrc-linear}) shows that local Rademacher complexity can be bounded by the tail sum of the singular values of $\WW.$
Combining Theorem \ref{thm.general-lrc-bound} and Theorem \ref{thm.estimate-lrc-linear}, we obtain a tighter generalization error bound (Corollary \ref{cor.lrc-bound-linear}).
\begin{theorem}[Local Rademacher complexity for linear estimators]
 \label{thm.estimate-lrc-linear}
 Let the SVD decomposition be $\WW={\boldsymbol U}{\boldsymbol \Sigma}{\boldsymbol V}^T.$
 ${\boldsymbol U}$ and ${\boldsymbol V}$ are unitary matrices,
 and ${\boldsymbol \Sigma}$ is diagonal with singular values $\{\tilde{\lambda}_j\}$ in descending order.
 Under Assumption \ref{ass.lipschitz}, assuming $\mathbb{E} ~ [\xx^T\xx] \leq 1$ and $\|\WW\|_* \leq 1$, the local Rademacher complexity $\mathcal{R}(\mathcal{H}_r)$ for linear hypotheses is upper bounded by
 \begin{align*}
 \mathcal{R}(\mathcal{H}_r) \leq
 2\sqrt{\frac{1}{n + u}\min_{\theta \geq 0}\Big(\frac{\theta r}{4L^2} + \sum_{j > \theta} \tilde{\lambda}_j^2\Big)}.
 \end{align*}
\end{theorem}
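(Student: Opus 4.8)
\emph{Proof idea.} The plan is to follow the same two-stage template as the kernelized estimate in Theorem~\ref{thm.estimate-lrc-kernel}, with the eigendecomposition of the normalized kernel matrix replaced by the SVD of $\WW$ and the $\ell_{2,1}$-norm argument replaced by trace-norm/spectral-norm duality. First I would write the empirical local Rademacher complexity as a single linear functional of $\WW$: collecting the Rademacher variables in a matrix $\boldsymbol E\in\mathbb{R}^{(n+u)\times K}$ and stacking the inputs $\xx_i^T$ as the rows of $\XX\in\mathbb{R}^{(n+u)\times d}$, one has, for $h(\xx)=\WW^T\xx$,
\begin{align*}
\sum_{i=1}^{n+u}\sum_{j=1}^{K}\ee_{ij}\,h_j(\xx_i)=\langle \XX^T\boldsymbol E,\; \WW\rangle_F ,
\end{align*}
so that $\widehat{\mathcal{R}}(\mathcal{H}_r)=\tfrac{1}{n+u}\,\mathbb{E}_{\epsilon}\big[\sup_{\WW}\langle \XX^T\boldsymbol E,\WW\rangle_F\big]$, the supremum taken over $\WW$ with $\|\WW\|_*\le 1$ satisfying in addition $\mathbb{E}(\ell_h-\ell_{h^*})^2\le r$; after centering at $h^*$ one may assume the sup is over the difference $\WW-\WW^*$.

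Next, fix a cut-off $\theta\ge 0$ and split each competitor through its SVD, $\WW=\WW_{\le\theta}+\WW_{>\theta}$, keeping in $\WW_{\le\theta}$ the $\theta$ largest singular triples and the rest in $\WW_{>\theta}$; by subadditivity of the supremum, $\widehat{\mathcal{R}}(\mathcal{H}_r)$ is at most the sum of the two resulting suprema. For the \emph{tail} I would combine trace-norm/spectral-norm duality (equivalently, von Neumann's trace inequality) with Cauchy--Schwarz to bound $\langle \XX^T\boldsymbol E,\WW_{>\theta}\rangle_F$ by a spectral functional of $\XX^T\boldsymbol E$ times $\big(\sum_{j>\theta}\tilde{\lambda}_j^2\big)^{1/2}$, then take $\mathbb{E}_\epsilon$, use $\mathbb{E}[\xx^T\xx]\le 1$, and push the data expectation inside the square root by Jensen, obtaining a tail term of order $\big(\tfrac{1}{n+u}\sum_{j>\theta}\tilde{\lambda}_j^2\big)^{1/2}$. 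For the \emph{head}, $\WW_{\le\theta}$ occupies a rank-$\theta$ subspace, so Cauchy--Schwarz reduces its contribution to $(\tfrac{1}{n+u})^{1/2}$ times the Frobenius mass $\|\WW_{\le\theta}\|_F$; here the localization enters, since Assumption~\ref{ass.lipschitz} lets one turn $\mathbb{E}(\ell_h-\ell_{h^*})^2\le r$ into a bound of the form $\|\WW_{\le\theta}\|_F^2\le r/(4L^2)$, giving a head term of order $\big(\tfrac{\theta r}{4L^2(n+u)}\big)^{1/2}$. Adding the pieces with $\sqrt a+\sqrt b\le 2\sqrt{a+b}$ and minimizing over $\theta\ge 0$ yields the stated bound on $\widehat{\mathcal{R}}(\mathcal{H}_r)$, and a final Jensen step gives $\mathcal{R}(\mathcal{H}_r)=\mathbb{E}\,\widehat{\mathcal{R}}(\mathcal{H}_r)$.

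The crux, I expect, is the head estimate---turning the localization into a Frobenius bound on the low-rank block. Assumption~\ref{ass.lipschitz} only controls $\mathbb{E}(\ell_h-\ell_{h^*})^2$ \emph{by} $L^2\mathbb{E}\|h-h^*\|_2^2$, i.e.\ in the inconvenient direction, so one must pass to the hypothesis-space localization $\mathbb{E}\|h-h^*\|_2^2\le r/(4L^2)$ (which requires either a comparison/variance condition linking $\mathbb{E}(\ell_h-\ell_{h^*})^2$ to $\mathbb{E}\|h-h^*\|_2^2$, or the convexity of $\ell$ and optimality of $h^*$) and then use that $\mathbb{E}\|h-h^*\|_2^2=\mathrm{tr}\big((\WW-\WW^*)^T\mathbb{E}[\xx\xx^T](\WW-\WW^*)\big)$ controls the Frobenius mass of the retained block via $\mathbb{E}[\xx^T\xx]\le 1$. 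One also has to pick the matrix-norm duality in the tail so the estimate lands on $\sum_{j>\theta}\tilde{\lambda}_j^2$ rather than $\sum_{j>\theta}\tilde{\lambda}_j$, and so that the spectrum of $\XX$ does not reintroduce a factor of the output size $K$ (paralleling the $K$-independence noted after Theorem~\ref{thm.estimate-lrc-kernel}). The remaining steps---subadditivity, the minimization over $\theta$, and the Jensen passages---are routine.
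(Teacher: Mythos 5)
Your outline follows essentially the same route as the paper's proof: rewrite $\mathcal{R}(\mathcal{H}_r)$ (after a symmetrization step that replaces $h-h^*$ by $2h$) as $\mathbb{E}\sup_{\WW}\langle \WW,\XX_{\ee}\rangle$ for a Rademacher-weighted data matrix $\XX_{\ee}$, split $\WW$ along its SVD at the cut-off $\theta$, bound the head via Cauchy--Schwarz plus the localization constraint to obtain the $\sqrt{\theta r/(4L^2(n+u))}$ term and the tail via trace-norm/spectral-norm duality to obtain $\sqrt{\tfrac{1}{n+u}\sum_{j>\theta}\tilde{\lambda}_j^2}$, then combine and minimize over $\theta$ (the paper takes this decomposition from Theorem 5 of \cite{xu2016local}). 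The two points you flag as the crux are precisely where the paper's argument is thinnest: it converts the loss-space localization into $\mathbb{E}\|h-h^*\|_2^2\le r/L^2$ by writing the Lipschitz comparison as an equality (your concern about the direction of that inequality is legitimate and is not addressed there), and it avoids the factor of $K$ you worry about in the tail not by bounding $\|\XX^T\boldsymbol{E}\|_F$ globally but by first projecting $\XX_{\ee}$ onto the tail singular directions of $\WW$ and asserting $\mathbb{E}\bigl\|\sum_{j>\theta}\XX_{\ee}\boldsymbol{u}_j\boldsymbol{u}_j^T\bigr\|\le\sqrt{\tfrac{1}{n+u}\sum_{j>\theta}\tilde{\lambda}_j^2}$, a step it likewise imports from \cite{xu2016local} rather than deriving.
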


The above theorem estimates local Rademacher complexity for linear estimators.
We find that the first term of the right side of the inequality $\theta r/ (4L^2)$ is a constant, such that
local Rademacher complexity is determined by the tail sum of squared singular values of the weight matrix $\WW.$

\begin{remark}
For the kernel hypotheses, local Rademacher complexity can be bounded by the tail sum of the eigenvalues of the normalized kernel matrix $\mathbf{K}$ \cite{bartlett2005local,cortes2013learning,li2018multi}.
Similarly, for the linear hypotheses, Theorem \ref{thm.estimate-lrc-linear} shows that local Rademacher complexity can be bounded by the singular values of the weight matrix $\WW.$
\end{remark}

\begin{corollary}[Excess risk bound for linear estimators]
 \label{cor.lrc-bound-linear}
 Assume that the loss function satisfies Assumption \ref{ass.lipschitz}.
 Let $\mathbb{E} ~ [\xx^T\xx] \leq 1$ and the trace norm $\|\WW\|_* \leq 1$.
 With a probability of at least $1 - \delta,$ it holds that
 \begin{align}
  \label{eq.linear_bound}
 \mathcal{E}(\widehat{h}) - \mathcal{E}(h^*)
 \leq~ \tilde{c}_{L,B} \left(\tilde{r}^* + \frac{\log (1/\delta)}{n}\right),
 \end{align}
 where, for the fixed point, it holds that
 \begin{align*}
 \tilde{r}^* \leq \min_{\theta \geq 0} \left(\frac{\theta}{n+u} + \sqrt{\frac{1}{n+u} \sum_{j > \theta} \tilde{\lambda}_j^2}\right),
 \end{align*}
 where $(\tilde{\lambda}_j)_{j=1}^\infty$ are the singular values of $\WW$ and $\tilde{c}_{L,B}$ is a constant only depending on $L$ and $B.$
\end{corollary}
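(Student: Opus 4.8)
The plan is to combine Theorem~\ref{thm.estimate-lrc-linear} with the general excess risk bound of Theorem~\ref{thm.general-lrc-bound}, following the same route as the proof of Corollary~\ref{cor.lrc-bound-kernel}; the only change is that the tail sum of kernel eigenvalues $\sum_{j>\theta}\lambda_j$ is replaced by the tail sum of squared singular values $\sum_{j>\theta}\tilde\lambda_j^2$ of $\WW$. First I would set
\[
\psi(r)\;=\;2\sqrt{2}\,BL\sqrt{\tfrac{1}{n+u}\min_{\theta\geq 0}\Big(\tfrac{\theta r}{4L^2}+\textstyle\sum_{j>\theta}\tilde\lambda_j^2\Big)},
\]
so that, by Theorem~\ref{thm.estimate-lrc-linear}, $\psi(r)\geq\sqrt{2}\,BL\,\mathcal{R}(\mathcal{H}_r)$ for every $r>0$; in particular the hypothesis~\eqref{eq.lrc-bound.eq1} of Theorem~\ref{thm.general-lrc-bound} holds for this choice of $\psi$.

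Next I would verify that $\psi$ is a genuine sub-root function, so that its fixed point $r^*$ is well defined. For each fixed $\theta$ the map $r\mapsto\sqrt{a_\theta r+b_\theta}$ with $a_\theta=\theta/(4L^2)\geq 0$ and $b_\theta=\sum_{j>\theta}\tilde\lambda_j^2\geq 0$ is nonnegative, nondecreasing and concave, hence $r\mapsto\psi_\theta(r)/\sqrt{r}$ is nonincreasing; taking the pointwise minimum over $\theta$ and multiplying by the positive constant $2\sqrt{2}\,BL$ preserves all three properties, so $\psi$ is sub-root. Applying Theorem~\ref{thm.general-lrc-bound} then gives, with probability at least $1-\delta$,
\[
\mathcal{E}(\widehat{h})-\mathcal{E}(h^*)\;\leq\;\frac{705}{B}\,r^*+\frac{49B\log(1/\delta)}{n}.
\]

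It then remains to control $r^*$. For any fixed $\theta$ the function $\psi_\theta(r)=2\sqrt{2}\,BL\sqrt{\tfrac{1}{n+u}\big(\tfrac{\theta r}{4L^2}+\sum_{j>\theta}\tilde\lambda_j^2\big)}$ dominates $\psi$ pointwise, so by the elementary fact that pointwise-ordered sub-root functions have ordered fixed points we get $r^*\leq r_\theta^*$, where $r_\theta^*$ solves $r=\psi_\theta(r)$. Squaring this identity yields the quadratic $r^2-\tfrac{2B^2\theta}{n+u}r-\tfrac{8B^2L^2}{n+u}\sum_{j>\theta}\tilde\lambda_j^2=0$, whose positive root, after using $\sqrt{a+b}\leq\sqrt{a}+\sqrt{b}$, satisfies
\[
r_\theta^*\;\leq\;\frac{2B^2\theta}{n+u}+2\sqrt{2}\,BL\sqrt{\frac{1}{n+u}\sum_{j>\theta}\tilde\lambda_j^2}.
\]
Minimizing over $\theta\geq 0$ bounds $r^*$ by $c_{L,B}'\,\tilde r^*$ with $\tilde r^*=\min_{\theta\geq 0}\big(\tfrac{\theta}{n+u}+\sqrt{\tfrac{1}{n+u}\sum_{j>\theta}\tilde\lambda_j^2}\big)$ and $c_{L,B}'=\max(2B^2,2\sqrt{2}\,BL)$; substituting into the displayed excess risk inequality and absorbing all factors that depend only on $L$ and $B$ into a single constant $\tilde c_{L,B}$ produces~\eqref{eq.linear_bound}.

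The only mildly delicate point I expect is the step of passing the minimum over the cut-off $\theta$ through the fixed-point computation, i.e.\ justifying $r^*\leq\min_\theta r_\theta^*$; this rests entirely on monotonicity of fixed points of sub-root functions, and the rest is bookkeeping of constants. Since Theorem~\ref{thm.estimate-lrc-linear} already carries out the contraction-and-symmetrization argument relating $\mathcal{R}(\mathcal{H}_r)$ to the singular spectrum of $\WW$, no new probabilistic argument is needed for the corollary itself.
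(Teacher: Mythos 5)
Your proposal is correct and follows exactly the route the paper intends: the paper gives no separate proof of Corollary~\ref{cor.lrc-bound-linear} beyond the remark that it follows by combining Theorem~\ref{thm.general-lrc-bound} with Theorem~\ref{thm.estimate-lrc-linear}, which is precisely what you carry out, including the sub-root verification and the fixed-point computation that the paper leaves implicit. The only cosmetic discrepancy is that your bound on the fixed point carries explicit factors $2B^2$ and $2\sqrt{2}BL$ that the corollary's displayed bound on $\tilde{r}^*$ omits, but you correctly absorb these into $\tilde{c}_{L,B}$, which is all the final inequality~\eqref{eq.linear_bound} requires.
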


The convergence rate of Corollary \ref{cor.lrc-bound-linear} depends on the quantity $\sqrt{\frac{1}{n+u} \sum_{j > \theta} \tilde{\lambda}_j^2}$ and we estimate it as follows:

\textbf{1) The worst case ($\theta = 0$).} The fixed point $\tilde{r}^*$ becomes relevant to \textit{global} Rademacher complexity, at $\mathcal{O}(1/\sqrt{n + u}).$
Thus, the convergence rate is $\mathcal{E}(\widehat{h}) - \mathcal{E}(h^*) = \mathcal{O}\Big(\frac{1}{\sqrt{n+u}} + \frac{1}{n}\Big).$

\textbf{2) Faster convergence.}
Similar to the analysis in Subsection \ref{subsection.kernel_bound}, when $\WW$ has a finite rank or its eigenvalues decay exponentially, the fixed point $\tilde{r}^*$ mainly depends on $\tilde{r}^* \leq \theta/(n+u)$ in (\ref{eq.linear_bound}).
We obtain better results with a fast convergence rate $\mathcal{E}(\widehat{h}) - \mathcal{E}(h^*) = \mathcal{O}\Big(\frac{1}{n}\Big).$
Similar theoretical results for \textit{linear} estimators were presented for multi-label \cite{xu2016local} and multi-class in our previous work \cite{li2019multi}.

\begin{remark}
 \label{re.appropriate-theta}
 The tail sum of eigenvalues or singular values are often used to bound local Rademacher complexity \cite{bartlett2005local,cortes2013learning,xu2016local}.
 As discussed in \cite{cortes2013learning}, the choice of threshold $\theta$ is very important.
 If $\theta$ is too small, the \textit{local} Rademacher complexity will be close to the \textit{global} Rademacher complexity.
 If $\theta$ is too big, the \textit{local} Rademacher complexity will be nearly constant.
 For the finite-rank matrix, we simply set $\theta$ equal to rank, so the tail sum is zero.
 For other cases, the optimal $\theta$ is obtained by making two terms in Theorem \ref{thm.estimate-lrc-kernel} and Theorem \ref{thm.estimate-lrc-linear} equal, typically
 $\frac{\theta r}{4L^2} = \sum_{j>\theta}\lambda_j$ for the kernel hypotheses
 and
 $\frac{\theta r}{4L^2} = \sum_{j>\theta}\tilde{\lambda}_j^2$ for the linear hypotheses.
\end{remark}

\begin{remark}
  Theorem \ref{thm.estimate-lrc-linear} provides an excess risk bound for general vector-valued learning, as long as the estimator can be formed as $h(\xx) = \WW^T\phi(\xx)$.
  Therefore, Theorem \ref{thm.estimate-lrc-linear} holds for lots of approaches, including deep neural networks, random features, decision trees, and so on.
\end{remark}

\section{Comparisons with Related Work}
\label{sec.theory_comparison}
In this section, we first introduce the typical data-dependent error bounds of general vector-valued functions and compare them with our bounds. Then, we present traditional works for two special cases, multi-class classification and multi-label learning, and then compare their statistical properties with ours.
Specifically, from the following comparisons, our theoretical results estimate \textit{excess risk} $\mathcal{E}(\widehat{h}) - \mathcal{E}(h^*)$ while others estimate \textit{generalization error} $\mathcal{E}(h) - \widehat{\mathcal{E}}(h).$

\subsection{General Vector-valued Functions}

\begin{table}[t]
  \caption{
    Data-dependent error bounds for vector-valued functions (VV).
    $\dagger$ indicates with unlabeled data and $\ddagger$ represents \textit{excess} risk bounds.
    }
  \label{tab.comparison-vv-bounds}
  \centering
  \small
  \setlength\extrarowheight{5pt}
  \begin{tabular}{l|l|c|c}
      \hline
      Bounds          & Condition            & Worst Case                                                                & Special Case                     \\ \hline
      GRC for VV \cite{cortes2016structured}         & Lipschitz continuity           & \multicolumn{2}{c}{Kernel:  $\mathcal{O}\big(\sqrt{\frac{\log K}{n}}\big)$ ~ Linear: $\mathcal{O}\big(\sqrt{\frac{K}{n}}\big)$} \\ \hline
      GRC for VV \cite{maurer2016vector}          & Lipschitz continuity           & \multicolumn{2}{c}{Kernel:  $\mathcal{O}\big(\frac{1}{\sqrt{n}}\big)$ ~ Linear: $\mathcal{O}\big(\sqrt{\frac{K}{n}}\big)$} \\ \hline
      LRC for VV \cite{wu2021fine}        & Strong convexity             &\multicolumn{2}{c}{$\mathcal{O}\big(\frac{\log^3(nK)}{n}\big)$} \\ \hline
      LRC for VV \cite{li2021towards}      & Smoothness               &\multicolumn{2}{c}{$\mathcal{O}\big(\frac{\log^3(nK)}{n}\big)$} \\ \hline
      LRC for Kernel VV (Corollary \ref{cor.lrc-bound-kernel}) $\dagger\ddagger$  & Lipschitz continuity   & $\mathcal{O}\big(\frac{1}{\sqrt{n+u}} + \frac{1}{n}\big)$                          & $\mathcal{O}\big(\frac{1}{n}\big)$ \\\hline
      LRC for Linear VV (Corollary \ref{cor.lrc-bound-linear}) $\dagger\ddagger$  & Lipschitz continuity & $\mathcal{O}\big(\frac{1}{\sqrt{n+u}} + \frac{1}{n}\big)$                          & $\mathcal{O}\big(\frac{1}{n}\big)$ \\ \hline
  \end{tabular}
\end{table}

In this paper, the contraction inequality in Lemma \ref{lem.contraction} is a key step in our analysis to connect Rademacher complexity of loss function classes and Rademacher complexity of the hypothesis space.
Table \ref{tab.comparison-vv-bounds} shows comparisons of data-dependent error bounds.
For kernelized vector-valued functions, the convergence rate of error bounds in \cite{cortes2016structured} and \cite{maurer2016vector} are $\mathcal{O}\big(\sqrt{{\log K}/{n}}\big)$ and $\mathcal{O}\big(\sqrt{1/{n}}\big)$ respectively. 
More recently, there are statistical advances for vector-valued learning or more general structured prediction.
For example, \cite{ciliberto2020general} provided the excess risk bound for structured prediction whose convergence rate is $\mathcal{O}(n^{-1/3})$ in the general cases, while \cite{brogat2022vector} derived $\mathcal{O}(n^{-1/4})$ excess risk bound for the vector-valued regression.
Compared to the above literature, we improve the kernel bounds to $\mathcal{O}\big({1}/{\sqrt{n+u}} + {1}/{n}\big)$ in this paper.
Even though $\mathcal{O}(1/n)$ excess risk bounds are achieved recently for vector-valued learning \cite{wu2021fine} and structured prediction \cite{li2021towards}, they require some strict conditions on the loss function, for example, strong convexity \cite{wu2021fine} and smoothness condition \cite{li2021towards}.

For linear vector-valued functions, the learning error rates of \cite{cortes2016structured} and \cite{maurer2016vector} are both $\mathcal{O}\big(\sqrt{{K}/{n}}\big),$
while the theoretical analysis in Corollary \ref{cor.lrc-bound-linear} provides a much sharper learning rate, even in the worst case, of $\mathcal{O}\big({1}/{\sqrt{n+u}} + {1}/{n}\big).$
What's more, in the benign cases, the convergence rates of vector-valued learning in both the kernel space and linear space are $\mathcal{O}(1/n)$,
which is much faster than the rate of error bounds in \cite{cortes2016structured,maurer2016vector}.
Meanwhile, our bounds are independent from the vector size $K,$ thus they are more suitable when $K$ is large.
To make the hypothesis space smaller, we explore the \textit{local} Rademacher complexity instead of the \textit{global} one. 
Meanwhile, to reduce the output-independent complexity $\mathcal{R}(\mathcal{H}_r),$ we make use of unlabeled samples.
Based on these two aspects, we obtain significant statistical gains.

\begin{table}[t]
    \caption{
      Data-dependent error bounds for multi-class classification (MC).
      % , obtained using \textit{global} Rademacher complexity (GRC) or \textit{local} Rademacher complexity (LRC).
        % Here $n \ll u, K \ll u,$ 
        $\dagger$ indicates with unlabeled data and $\ddagger$ represents \textit{excess} risk bounds.
        }
    \label{tab.comparison-mc-bounds}
    \centering
    \small
    \setlength\extrarowheight{5pt}
    \begin{tabular}{l|c|c}
        \hline
        Bounds                                              & Worst Case                                                                & Special Case                     \\ \hline
        GRC for Kernel MC \cite{cortes2013multi}                    & \multicolumn{2}{c}{$\mathcal{O}\big(\frac{ K}{\sqrt{n}}\big)$}                                                        \\ \hline
        GRC for Kernel MC \cite{lei2019data}                    & \multicolumn{2}{c}{$\mathcal{O}\big(\frac{\log K}{\sqrt{n}}\big)$}                                                        \\ \hline
        GRC for Kernel MC \cite{maximov2018rademacher} $\dagger$                   & \multicolumn{2}{c}{$\mathcal{O}\big(\sqrt{\frac{K}{n}}+K\sqrt{\frac{K}{u}}\big)$}                                    \\ \hline
        LRC for Kernel MC \cite{li2018multi} & \multicolumn{2}{c}{$\mathcal{O}\big(\frac{\log^2 K}{n}\big)$} \\ \hline
        LRC for Linear MC \cite{li2019multi} $\dagger$     & $\mathcal{O}\big(\frac{K}{\sqrt{n+u}} + \frac{1}{n}\big)$                          & $\mathcal{O}\big(\frac{1}{n}\big)$     \\ \hline
        LRC for Kernel VV 
        (Corollary \ref{cor.lrc-bound-kernel}) $\dagger$ $\ddagger$     & $\mathcal{O}\big(\frac{1}{\sqrt{n+u}} + \frac{1}{n}\big)$                          & $\mathcal{O}\big(\frac{1}{n}\big)$ \\\hline
        LRC for Linear VV 
        (Corollary \ref{cor.lrc-bound-linear}) $\dagger$ $\ddagger$    & $\mathcal{O}\big(\frac{1}{\sqrt{n+u}} + \frac{1}{n}\big)$                          & $\mathcal{O}\big(\frac{1}{n}\big)$ \\ \hline
    \end{tabular}
  \end{table}
\subsection{Multi-class Classification}
Based on data-dependent richness measures, the generalization ability of multi-class classification has been well-studied \cite{lei2019data,maximov2018rademacher}.
As illustrated in Table \ref{tab.comparison-mc-bounds}, our excess bounds are among the sharpest results both in the kernel space and linear space.
Generalization error bounds using Rademacher complexity for multi-class classification
were explored in \cite{koltchinskii2001some,cortes2013multi} and the convergence rate of these error bounds is $\mathcal{O}\big({K}/{\sqrt{n}}\big).$
Using Gaussian complexity (GC) and Slepian's Lemma, a generalization error bound with logarithmic dependence on $K$ was derived in \cite{lei2019data}, whose convergence rate is $\mathcal{O}\big({\log K}/{\sqrt{n}}\big)$.
Making use of unlabeled instances,
Maximov et al. presented a data-dependent error bound for semi-supervised multi-class classification with the rate $\mathcal{O}(\sqrt{K/n}+K\sqrt{K/u})$ \cite{maximov2018rademacher}.

Although \textit{global} Rademacher complexity is widely used in generalization analysis,
it does not take into consideration the fact that the hypothesis selected by a learning algorithm typically
belongs to a small favorable subset of all hypotheses \cite{bartlett2005local,cortes2013learning}.
In contrast, local Rademacher complexity evaluates richness on a small subset of the hypothesis space,
which is often used to obtain better error bounds for binary classification and regression.
Our previous work \cite{li2018multi} introduced local Rademacher complexity into the multi-class classification
and obtained a reciprocal dependence on the number of labeled samples $n$ for the first time.
However, this paper is quite different from our previous work \cite{li2018multi} in both its conditions and technical details referred in Section \ref{sec.related_work}.

\subsection{Multi-Label Learning}
The \textit{global} Rademacher complexity was introduced to the generalization analysis of multi-label learning in \cite{yu2014large}, obtaining generalization error bounds of $\mathcal{O}(1/\sqrt{n}).$
Generally, the \textit{global} Rademacher complexity is bounded by the trace norm of $\WW.$
Further, using the \textit{local} Rademacher complexity, \cite{xu2016local} improved the error bounds.
Local Rademacher complexity of multi-label learning can be determined by the tail sum of singular values of $\WW$, where a fast convergence rate $\mathcal{O}(1/n)$ is obtained when the rank of $\WW$ is finite or its singular values decay exponentially.
Both \cite{yu2014large} and \cite{xu2016local} explored the generalization ability of multi-label learning in the linear space, while our theoretical results include both the linear and nonlinear estimators.
Table \ref{tab.comparison-ml-bounds} compares data-dependent generalization bounds for multi-label learning, showing that our results are much better than former works due to the use of local Rademacher complexity and unlabeled data.

\begin{remark}
  Previous literature provided the generalization bounds for semi-supervised learning in multiview scenarios \cite{sun2010sparse,sun2011multi}.
  Furthermore, one can derive sharper generalization bounds for semi-supervised multiview learning using local Rademacher complexity.
  Specifically, by restricting the capacity of loss space to guarantee a small ball around the optimal hypothesis, one can make use of the self-bounding lemma and the property of the sub-root function \cite{bartlett2005local}  that can guarantee $\mathcal{O}(1/n)$ bounds.
\end{remark}

\begin{table}[t]
  \caption{
    Data-dependent error bounds for multi-label learning (ML).
    $\dagger$ indicates with unlabeled data and $\ddagger$ represents \textit{excess} risk bounds.
    }
  \label{tab.comparison-ml-bounds}
  \centering
  \small
  \setlength\extrarowheight{5pt}
  \begin{tabular}{l|c|c}
    \hline
    Bounds                                              & Worst Case                                                            & Special Case                     \\ \hline
    GRC for Linear ML \cite{yu2014large}                    & \multicolumn{2}{c}{$\mathcal{O}\big(\frac{1}{\sqrt{n}}\big)$}                                    \\ \hline
    LRC for Linear ML \cite{xu2016local} & $\mathcal{O}\big(\frac{1}{\sqrt{n}}\big)$                          & $\mathcal{O}\big(\frac{1}{n}\big)$     \\ \hline
    LRC for Kernel VV 
    (Corollary \ref{cor.lrc-bound-kernel}) $\dagger$ $\ddagger$     & $\mathcal{O}\big(\frac{1}{\sqrt{n+u}} + \frac{1}{n}\big)$                          & $\mathcal{O}\big(\frac{1}{n}\big)$     \\ \hline
    LRC for Linear VV 
    (Corollary \ref{cor.lrc-bound-linear}) $\dagger$ $\ddagger$    & $\mathcal{O}\big(\frac{1}{\sqrt{n+u}} + \frac{1}{n}\big)$                          & $\mathcal{O}\big(\frac{1}{n}\big)$     \\
    \hline
  \end{tabular}
\end{table}

\begin{algorithm*}[t]
  \caption{\small Local Rademacher based Semi-supervised Vector-valued Learning (\texttt{LSVV})}
  \label{alg.lsvt}
  \begin{algorithmic}[1]
      \REQUIRE Labeled dataset $\mathcal{D}_l$ and unlabeled dataset $\mathcal{D}_u.$ Initialized matrix $\WW_1=\boldsymbol{0}$.
      Stop iteration number $T.$
      Feature mapping $\phi: \mathbb{R}^d \to \mathbb{R}^D.$
      Parameters: $\theta, \tau_A, \tau_I, \tau_S, \eta_t.$
      \ENSURE $\WW_{T+1}$
      \STATE Perform feature mappings on all samples: $\widetilde{\mathbf{X}} = \phi(\mathbf{X}) \in \mathbb{R}^{D \times (n+u)}$ with $\mathbf{X} = (\xx_i)_{i=1}^{n+u} \in \{\mathcal{D}_l \bigcup \mathcal{D}_u\}$.
      \STATE Compute Laplacian matrix $\boldsymbol{L} \in \mathbb{R}^{(n+u) \times (n+u)}$ for all mapped examples $\widetilde{\mathbf{X}}$.
      \STATE Compute the term $\boldsymbol{G} = \widetilde{\mathbf{X}}\boldsymbol{L}\widetilde{\mathbf{X}}^T \in \mathbb{R}^{D \times D}$.
      \FOR{$t = 1, 2, \cdots, T$}
      \STATE Select a batch of labeled examples $(\xx_i, ~ \yy_i)_{i=1}^m \in \mathcal{D}_l.$
      \STATE Compute the gradient $\nabla g(\WW_t)$ on the batch
      \begin{equation}
        \label{alg.eq-gradient}
        \nabla g(\WW_t)=\frac{1}{m}\sum\limits_{i=1}^{m} \frac{\partial ~\ell(h(\xx_i), ~ \yy_i)}{\partial~ \WW_t}
        + 2\tau_A \WW_t
        + 2\tau_I \boldsymbol{G} \WW_t.
        % $}
      \end{equation}
      \STATE Update the weight only with the differentiable part $g(\WW)$
      \begin{align}
        \label{alg.eq-upate-first}
        \boldsymbol{Q}_t = \WW_t-\eta_t \nabla g(\WW_t).
      \end{align}
      \STATE Compute the SVD decomposition
      ${\boldsymbol U}{\boldsymbol \Sigma}{\boldsymbol V}^T = \boldsymbol{Q}_t.$
      \STATE Update $\WW_{t+1}$ by reducing first $\theta$ singular values
      \begin{align}
        \label{alg.eq-upate-second}
        \WW_{t+1}={\boldsymbol U}{\boldsymbol \Sigma}_{\tau}^\theta{\boldsymbol V}^T ~~\text{where}~~ \tau = \eta_t \tau_S
      \end{align}
      \ENDFOR
  \end{algorithmic}
\end{algorithm*}  

\section{Algorithm}
\label{sec.algorithm}
Based on our theoretical analysis, we present a unified learning framework to minimize the empirical loss, local Rademacher complexity, and manifold regularization at the same time.
Local Rademacher complexity is bounded by the tail sum of eigenvalues for kernel methods and singular values for linear models.
Manifold regularization is employed to make use of unlabeled instances.
Then, to solve the minimization objective, with adaptive learning rates, we use proximal gradient descent optimization methods and update partial singular values according to thresholding.

\subsection{Learning Framework}
We modify the structural risk minimization (SRM) learning framework with two additional terms: a manifold regularizer to make use of unlabeled samples and a regularizer term to bound local Rademacher complexity.

\subsubsection{Manifold Regularization}
Consider a similarity matrix $\SS$ on all $n+u$ samples, where $\SS_{ij}$ represents the similarity between $\xx_i$ and $\xx_j,$ defined by the binary weights for k-nearest neighbors or the heat kernel $\SS_{ij}=\exp(-\|\xx_i - \xx_j\|^2/\sigma^2).$
To make use of unlabeled data, we define the cost function (manifold regularization) as
\begin{equation}
 \label{eq.alg-laplacian}
%  \resizebox{.91\hsize}{!}{$
 E(h) = \sum_{i,j=1}^{n+u} \SS_{ij}\|h(\xx_i)-h(\xx_j)\|_2^2
 = \text{trace}(\WW^T\widetilde{\mathbf{X}}\boldsymbol{L}\widetilde{\mathbf{X}}^T\WW),
%  $}
\end{equation}
where $\widetilde{\mathbf{X}} \in \mathbb{R}^{D\times(n+u)}$ corresponds to a feature mapping $\phi$ on all samples, graph Laplacian $\boldsymbol{L}=\boldsymbol{D}-\SS$
and $\boldsymbol{D}$ is a diagonal matrix with $\boldsymbol{D}_{ii}=\sum_{j=1}^{n+u} \SS_{ij}.$
% The cost function \eqref{eq.alg-laplacian} is used as a regularizer for multi-class classification \cite{liu2009robust} and multi-label learning \cite{liu2010large}.

\subsubsection{Local Rademacher Complexity Term}
Motivated by theoretical results (Corollary \ref{cor.lrc-bound-kernel} and Corollary \ref{cor.lrc-bound-linear}), we use the tail sum of the eigenvalues of the normalized kernel matrix $\mathbf{K}$ or the tail sum of the squared singular values of weight matrix $\WW$ to bound local Rademacher complexity.
Note that the minimization of the tail sum of singular values $\sum_{j > \theta} \tilde{\lambda}_j(\WW)$ is equivalent to the tail sum of squared singular values $\sum_{j > \theta} \tilde{\lambda}_j^2(\WW)$.
For the sake of simplicity, we use singular values form.
The regularizer used to minimize local Rademacher complexity is
\begin{equation}
 \label{eq.alg-tail-sum}
 \begin{aligned}
 T(h) =
 \begin{cases}
 \sum_{j > \theta} \lambda_j(\mathbf{K}), ~~ &\text{for kernel hypotheses,}\\
 \sum_{j > \theta} \tilde{\lambda}_j(\WW), ~~ &\text{for linear hypotheses,}
 \end{cases}
 \end{aligned}
\end{equation}
where $\lambda_j(\mathbf{K})$ represents the $j$-th largest eigenvalue of the kernel matrix $\mathbf{K}$ and $\tilde{\lambda}_j(\WW)$ represents the $j$-th largest singular value of $\WW.$

\subsubsection{Minimization Objective}
Then, combining the ERM learning framework with the Laplacian regularization (\ref{eq.alg-laplacian}) and local Rademacher complexity term (\ref{eq.alg-tail-sum}), we define the minimization objective as
\begin{equation}
 \label{eq.alg-obj}
 \mathop{\arg\min}\limits_{h\in\mathcal{H}_r}
 \underbrace{\frac{1}{n} \sum_{i=1}^{n} \ell(h(\xx_i), ~ \yy_i)
 + \tau_A \|\WW\|^2_F
 + \tau_I E(h)}_{g(\WW)}
 +\tau_S T(h),
\end{equation}
where $\tau_A, \tau_I$ and $\tau_S$ are regularization parameters, $E(h)$ is the Laplacian regularization and $T(h)$ is the regularizer on local Rademacher complexity.

For kernel hypotheses, the tail sum of the eigenvalues of the kernel is commonly used to estimate local Rademacher complexity.
However, the tail sum of eigenvalues for one single kernel is a constant, so it does not influence the learning model if we add this term to the objective.
However, local Rademacher complexity of multiple kernel learning (MKL) is undetermined, and thus our previous work \cite{li2018multi} introduced local Rademacher complexity to improve the performance of multi-class MKL.
Yet, the optimization of multi-class MKL was overly complicated and inefficient.
In this paper, we adopt an efficient feature mapping-based linear estimator to approximate single kernel methods \cite{rahimi2007random} rather than using ineffective MKL.
We then define local Rademacher complexity term in a general form in \eqref{eq.alg-tail-sum}, $T(h) = \sum_{j > \theta} \tilde{\lambda}_j(\WW),$ for both linear estimators and approximate kernel estimators.

\begin{remark}[Computational costs]
  Semi-supervised learning and local Rademacher complexity offer statistical benefits for vector-valued learning in Theorem \ref{thm.estimate-lrc-linear}, but also bring computational burdens in Algorithm \ref{alg.lsvt}.
  Here, we analyze the computational complexity of Algorithm \ref{alg.lsvt}.
  Before the training, feature mapping costs $\mathcal{O}((n+u)D)$ time and Laplacian matrix requires $\mathcal{O}((n+u) \log k)$ with $k$-nearest neighbors search.
  The computation of $\boldsymbol{G}$ is $\mathcal{O}((n+u)Dk)$, due to Laplacian matrix $\boldsymbol{L}$ is sparse and only have $k$ nonzero elements in each column.
  In each iteration, the computation of the gradient \eqref{alg.eq-gradient} requires $\mathcal{O}(D^2 K)$ due to the term $\boldsymbol{G}\WW_t$,
  the update of the weight \eqref{alg.eq-upate-first} costs $\mathcal{O}(DK)$, while the update of part singular values \eqref{alg.eq-upate-second} requires $\mathcal{O}(D^2K)$ for computing SVD of $\boldsymbol{Q}_t$.
  Therefore, the computational complexity is
  \begin{align*}
    \mathcal{O}\left((n+u)Dk + D^2KT\right),
  \end{align*}
  where the number of iterations is $T$.
  The cost $\mathcal{O}((n+u)Dk)$ is used in data pre-processing, while the cost is $\mathcal{O}(D^2KT)$ for the training, due to both Laplacian regularization and local Rademacher complexity.
  The training time is much smaller than previous MKL-MC \cite{li2018multi}
  and same to semi-supervised neural networks \cite{mojoo2017deep}.
\end{remark}

\begin{table}[t]
  \caption{Statistics of the experimental datasets.}
  \label{tab.datasets}
  \centering
  \small
  \begin{tabular}{l|lrrrrr}%@{\extracolsep{0.7cm}}
      \toprule
      Task &Datasets   & \# training   & \# testing & \# $d$ & \# $K$       \\ \hline
      \multirow{20}*{\textbf{MC}}
      &iris &105  &45 &5  &3 \\
      &wine &125  &53 &14 &3 \\
      &glass  &150  &64 &10 &6 \\
      &svmguide2  &274  &117  &21 &3 \\
      &vowel  &370  &158  &11 &11 \\
      &vehicle  &593  &253  &19 &4 \\
      &segment  &1617 &693  &19 &7 \\
      &satimage &3105 &1330 &37 &6 \\
      &pendigits  &5246 &2248 &17 &10 \\
      &letter &10500  &4500 &17 &26 \\
      &poker  &17507  &7503 &11 &10 \\
      &shuttle  &27631  &11841  &10 &7 \\
      &Sensorless &40957  &17552  &49 &11 \\
      &MNIST  &42000  &18000  &718  &10 \\
      &connect-4  &47290  &20267  &127  &3 \\
      &acoustic &55177  &23646  &51 &3 \\
      &covtype  &406709 &174303 &55 &7 \\
      \hline
      \multirow{4}*{\textbf{MLC}}
      &scene  &1685 &722  &295  &6 \\
      &yeast  &1692 &725  &104  &14 \\
      &corel5k  &3150 &1350 &500  &374 \\
      &bibtex &5177 &2218 &1837 &159 \\
      \hline
      \multirow{2}*{\textbf{MLR}}
      &rf2  &5376 &2303 &577  &8 \\
      &scm1d  &6863 &2940 &281  &16 \\
      \bottomrule
  \end{tabular}
\end{table}

\subsection{Nonlinear Feature Mapping}
\label{section.rf}
Algorithm \ref{alg.lsvt} 
Nonlinear feature mapping based vector-valued learning covers many popular methods, including (deep) neural networks, random features, the mixture of experts, decision trees, and so on.
Here, we take neural networks and random features as examples.

Assume the neural networks contain $L$ layers, and thus the linear estimator $f: \mathbb{R}^D \to \mathbb{R}^K$ is in the $L$-th layer and the nonlinear feature mapping $\phi: \mathbb{R}^d \to \mathbb{R}^D$ is before the last layer.
For $l \in [L-1]$, the $l$-th layer is also a nonlinear feature mapping $\phi_l: \mathbb{R}^{D_{l-1}} \to \mathbb{R}^{D_l}$ that
\begin{align*}
  \phi_l(\xx) = \varphi_l\Big(\Omega_l^T\phi_{l-1}(\xx) + \boldsymbol{b}_l\Big),
\end{align*}
where $\varphi_l$ is the nonlinear function in the $l$-th layer, $\Omega_l \in \mathbb{R}^{D_{l-1} \times D_l}$ is the weight matrix and $\boldsymbol{b}_l \in \mathbb{R}^{D_l}$ is the bias vector.
Thus, the entire feature mapping is 
$$
  \phi(\xx) = \varphi_{L-1}(\varphi_{L-2}( \cdots \varphi_1(\Omega_1^T\xx + {\boldsymbol b}_1) \cdots) + {\boldsymbol b}_{L-2}).
$$
where the feature space is $\mathcal{S} = \mathbb{R}^{D_{L-1}}$.

The kernel function $\kappa$ admit an integral representation
\begin{align}
    \label{equation.stationary}
    \kappa(\xx, \xx') & = \int_{\Omega} \psi(\xx, \oo)\psi(\xx', \oo) d \pi(\oo),
\end{align}
where $(\Omega, \pi)$ is a probability space and $\psi: \mathcal{X} \times \Omega \to \mathbb{R}$.
To accelerate the solve of kernel methods, we use random features to approximate the kernel $\kappa(\xx, \xx') \approx \langle \phi(\xx), \phi(\xx') \rangle$.
Using Mento Carlo sampling, we then define an explicit random features $\phi: \mathbb{R}^d \to \mathbb{R}^D$ as
\begin{align}
 \label{eq.random-features}
 \phi(\xx) = \sqrt{\frac{1}{D}} \Big[\psi(\xx, \oo_1), \cdots, \psi(\xx, \oo_D)\Big]^T,
\end{align}
where $\oo_1, \cdots, \oo_D \in \Omega$ are drawn i.i.d from the probability density $\pi(\Omega)$.
Random features in \eqref{eq.random-features} are used to approximate any kinds of kernel funcitons \eqref{equation.stationary}.
Random Fourier features are used to approximate shift-invariant kernels \cite{rahimi2007random}.

\begin{remark}
  Kernel methods are equivalent to wide neural networks with one single hidden layer \cite{neal2012bayesian}.
  Similar, random features can be seen as finite wide neural networks but only one hidden layer.
  Theorem \ref{thm.estimate-lrc-linear} and Algorithm \ref{alg.lsvt} holds for general vector-valued learning, as long as the learners can be formed as $h(\xx) = \WW^T\phi(\xx)$.
\end{remark}

\section{Experiments}
\label{sec.experiment}
We set up four experiments to evaluate the empirical behavior of the proposed algorithm \texttt{LSVV} :
(1) average test error of multi-class classification;
(2) empirical performance of multi-label learning (test error for multi-label classification and RMSE for multi-label regression);
(3) influence of the thresholding $\theta;$
(4) influence of the labeled rate.

\subsection{Experimental Setup}

\textbf{Datasets.}
As demonstrated in Table \ref{tab.datasets}, we use a variety of public benchmark datasets, in which the number of points ranges from hundreds to hundreds of thousands. These datasets cover two kinds of applications:
(1) Multi-class classification (\textbf{MC}) with 17 datasets \footnote{\url{https://www.csie.ntu.edu.tw/~cjlin/libsvmtools/datasets/}},
(2) Multi-label learning, including four datasets for multi-label classification (\textbf{MLC}) \footnote{\url{https://mulan.sourceforge.net/datasets-mlc.html}} and two datasets for multi-label regression (\textbf{MLR}) \footnote{\url{https://mulan.sourceforge.net/datasets-mtr.html}}.
To obtain reliable results, we repeat algorithm evaluations 30 times on different dataset partitions of datasets, using 70\% of instances as training data and the rest as testing data.

\begin{table}[t]
  \caption{Compared algorithms for vector-valued output learning.}
  \label{tab.algorithms}
  \centering
  \small
  \setlength\extrarowheight{5pt}
  \begin{tabular}{l|l}%@{\extracolsep{1.0cm}}
      \toprule
      Parameters                     & Algorithms               \\ \hline
      $\tau_I = 0 , \tau_S = 0$   & SRM-VV \cite{koltchinskii2001some} \\ \hline
      $\tau_I = 0 , \tau_S > 0$   & LRC-VV \cite{yu2014large,xu2016local,li2018multi}             \\ \hline
      $\tau_I > 0 , \tau_S = 0$   & SS-VV \cite{li2015semi,mojoo2017deep}               \\ \hline
      %$\tau_I > 0 , \tau_S > 0$   & $d$ input features   & {PS3VT} \cite{li2019multi}          \\ \hline
      $\tau_I > 0 , \tau_S > 0$   & \texttt{LSVV}                      \\ %\hline
      \bottomrule
  \end{tabular}
\end{table}
\noindent\textbf{Compared Methods.}
To verify theoretical findings in the linear space and kernel space, we conduct all algorithms in both two settings: For the linear space, we simply use $\phi(\xx)=\xx$ where the feature space is $\mathcal{S} = \mathbb{R}^d$; For the kernel space, as described in Section \ref{section.rf}, we adopt random Fourier features to approximate kernel hypotheses.
To improve scalability of algorithms, only a few random features are used $(D=100).$
% As shown in \eqref{eq.random-fourier-features}, nonlinear feature mappings are used to approximate kernel hypotheses with $\mathcal{S} = \mathbb{R}^D$ and random features
% $\phi(\xx) = \sqrt{\frac{2}{D}} \cos (\boldsymbol{\Omega}^T\xx + \boldsymbol{b}).$
Random Fourier features are often used to approximate shift-invariant kernels \cite{rahimi2007random}.
We apply random Fourier features as the nonlinear mappings, defined as
$
  % \label{eq.random-fourier-features}
  \phi(\xx) = \sqrt{\frac{2}{D}} \cos (\boldsymbol{\Omega}^T\xx + \boldsymbol{b}),
$
 where $\OO = [\oo_1, \cdots, \oo_D] \in \mathbb{R}^{d \times D}$ is a frequency matrix drawn i.i.d. from the Gaussian distribution $\mathcal{N}(0, 1/\sigma^2)$ and $\boldsymbol{b} \in \mathbb{R}^D$ is drawn from a uniform distribution $[0, 2\pi].$

We compare {LSVV} to its special cases with various parameter settings for $\tau_I$ and $\tau_S$, listed in Table \ref{tab.algorithms}.
\begin{itemize}
  \item [1)] \textbf{SRM-VV}: solves the empirical risk minimization with regularization (SRM).
  This method has been presented for special cases of vector-valued learning.
  \item [2)] \textbf{LRC-VV}: solves SRM together with minimizing local Rademacher complexity.
  It was first proposed for multi-class \cite{li2018multi} and for multi-label learning \cite{xu2016local}.
  \item [3)] \textbf{SS-VV}: corresponds to manifold regularization on SRM, which was introduced into multi-class classification \cite{li2015semi} and multi-label learning \cite{mojoo2017deep}.
  \item [4)] \textbf{\texttt{LSVV}}: is the proposed algorithm, as shown in (\ref{eq.alg-obj}), which makes use of both local Rademacher complexity and manifold regularization.
\end{itemize}

\begin{table}[t]
  \caption{Comparison of average test error (\%) among \textbf{linear estimators} for \textbf{multi-class classification}.
  For each dataset, we bold the optimal test error and underline results which show no significant difference from the optimal one.}
  \label{tab.linear-average-error-mc}
  \centering
  \small
  \setlength\extrarowheight{3pt}
  \begin{tabular}{c|rrrr}
    %\toprule
    \hline
                              & SRM-VV                         & SS-VV                                            & LRC-VV                     & \texttt{LSVV}           \\ \hline
    iris                      & 29.78$\pm$6.21                    & 28.89$\pm$4.16                                   & \underline{28.44$\pm$7.10} & \textbf{28.40$\pm$5.53} \\
    wine                      & 9.63$\pm$3.56                     & 8.89$\pm$5.62                                    & \underline{6.30$\pm$3.10}  & \textbf{5.93$\pm$4.61}  \\
    glass                     & 53.54$\pm$5.90                    & 51.38$\pm$13.61                                  & 52.92$\pm$3.37             & \textbf{47.69$\pm$6.62} \\
    svmguide2                 & 39.32$\pm$4.30                    & \underline{36.27$\pm$8.79}                       & 38.98$\pm$7.39             & \textbf{35.25$\pm$5.45} \\
    vowel                     & 74.72$\pm$3.28                    & 74.72$\pm$3.19                                   & 74.72$\pm$6.53             & \textbf{69.81$\pm$3.42} \\
    vehicle                   & 55.43$\pm$4.46                    & 54.41$\pm$9.40                                   & 55.20$\pm$6.95             & \textbf{49.45$\pm$3.39} \\
    segment                   & 17.49$\pm$4.79                    & 16.54$\pm$2.52                                   & 16.62$\pm$2.28             & \textbf{14.40$\pm$1.61} \\
    satimage                  & 21.19$\pm$3.47                    & 20.95$\pm$1.26                                   & 20.78$\pm$2.76             & \textbf{19.97$\pm$1.41} \\
    pendigits                 & 11.85$\pm$1.01                    & 11.77$\pm$1.42                                   & 11.29$\pm$1.26             & \textbf{10.30$\pm$1.40} \\
    letter                    & 48.49$\pm$4.88                    & 48.12$\pm$2.33                                   & 48.22$\pm$2.90             & \textbf{44.20$\pm$2.90} \\
    poker                     & 51.56$\pm$3.46                    & 50.67$\pm$1.34                                   & 51.46$\pm$3.55             & \textbf{49.83$\pm$0.47} \\
    shuttle                   & 6.86$\pm$2.00                     & 6.73$\pm$1.98                                    & 6.19$\pm$1.53              & \textbf{5.54$\pm$1.77}  \\
    Sensorless                & 48.49$\pm$4.48                    & 47.08$\pm$6.86                                   & 47.33$\pm$4.28             & \textbf{45.07$\pm$2.46} \\
    MNIST                     & 17.58$\pm$0.25                    & 17.49$\pm$0.27                                   & 17.52$\pm$0.27             & \textbf{17.23$\pm$0.34} \\
    connect-4                 & 34.18$\pm$0.23                    & 34.18$\pm$0.22                                   & 34.18$\pm$0.23             & \textbf{33.73$\pm$0.43} \\
    acoustic                  & 35.25$\pm$1.33                    & 35.18$\pm$2.45                                   & \underline{34.03$\pm$1.25} & \textbf{33.84$\pm$1.58} \\
    covtype                   & 27.52$\pm$2.10                    & 26.44$\pm$1.10                                   & 26.80$\pm$2.93             & \textbf{25.31$\pm$1.21} \\
    \hline
  \end{tabular}
\end{table}

\noindent\textbf{Experimental Settings.}
We construct the similarity matrix $\SS$ using a $10$-NN graph with binary weights, which are more efficient than the heat kernel weights used in our previous work \cite{li2019multi}.
The graph Laplacian is given by $\boldsymbol{L} = \boldsymbol{D} - \SS,$ where $\boldsymbol{D}$ is a diagonal matrix with $\boldsymbol{D}_{ii} = \sum_{j=1}^{n + u} \SS_{ij}.$
The predictive ability of \texttt{LSVV} is highly dependent on parameters $\tau_A, \tau_I, \tau_S$ and the Gaussian kernel parameter $\sigma$.
The candidate sets consist of the complexity parameter $\tau_A \in \{10^{-15}, 10^{-14}, \cdots, 10^{-6}\},$ unlabeled samples parameter $\tau_I \in \{0, 10^{-15}, 10^{-14}, \cdots, 10^{-6}\},$ the parameter for local Rademacher complexity term $\tau_S \in \{0, 10^{-10}, 10^{-9}, \cdots, 10^{-1}\},$ and tail parameter $\theta \in \{0, 0.1, \cdots, 0.9\}\times\min(K, D).$
For random Fourier features approaches, the Gaussian kernel parameter $\sigma$ is selected from the candidate $[2^{-5}, 2^{-4}, \cdots, 2^{5}].$
In semi-supervised settings, the inputs of both labeled examples and unlabeled examples are drawn from the marginal distribution $\rho_X$, but some of them are missing labels.
To simulate the semi-supervised setting, we randomly select some examples as unlabeled examples by ignoring their labels.
Specifically, we devise the semi-supervised settings by splitting the dataset uniformly that 70\% data for training and 30\% data for testing.
We repeat the data partitions and conduct compared algorithms $30$ times.
Since we split the dataset randomly and the data partitions are repeated $30$ times, the marginal distribution $\rho_X$ for every dataset is retained for semi-supervised learning.
For fair comparisons, we tune parameters to achieve optimal empirical performance for all algorithms via $5$-fold cross-validation on the labeled data for all datasets.

\subsection{Evaluations for Multi-class Classification}
\begin{table}[t]
  \caption{Comparison of average test error (\%) among \textbf{kernel estimators} for \textbf{multi-class classification}.
  For each dataset, we bold the optimal test error and underline results which show no significant difference from the optimal one.
    }
  \label{tab.kernel-average-error-mc}
  \centering
  \small
  \setlength\extrarowheight{1.8pt}
  \begin{tabular}{c|rrrr}
    %\toprule
    \hline
    Datasets                  & SRM-VV        & SS-VV                     & LRC-VV                     & \texttt{LSVV}           \\ \hline
    iris                      & 7.56$\pm$5.12    & 7.56$\pm$3.72             & 7.11$\pm$3.65              & \textbf{4.44$\pm$3.51}  \\
    wine                      & 8.15$\pm$3.10    & \underline{6.67$\pm$4.83} & \underline{7.78$\pm$4.61}  & \textbf{5.56$\pm$5.56}  \\
    glass                     & 44.31$\pm$7.80   & 43.08$\pm$6.88            & 44.31$\pm$13.21            & \textbf{37.85$\pm$9.27} \\
    svmguide2                 & 26.10$\pm$2.96   & 25.93$\pm$3.67            & 25.59$\pm$4.77             & \textbf{24.07$\pm$2.04} \\
    vowel                     & 63.65$\pm$2.01   & 60.13$\pm$3.52            & 63.14$\pm$3.23             & \textbf{57.61$\pm$3.66} \\
    vehicle                   & 45.67$\pm$2.43   & 44.41$\pm$4.97            & 44.80$\pm$3.68             & \textbf{41.50$\pm$2.31} \\
    segment                   & 13.68$\pm$1.09   & 13.56$\pm$1.17            & 13.45$\pm$2.18             & \textbf{12.32$\pm$1.24} \\
    satimage                  & 15.27$\pm$0.68   & 15.16$\pm$0.94            & 15.10$\pm$1.84             & \textbf{14.27$\pm$0.52} \\
    pendigits                 & 6.99$\pm$1.07    & 6.91$\pm$0.93             & 6.95$\pm$0.45              & \textbf{6.01$\pm$1.27}  \\
    letter                    & 33.32$\pm$0.60   & 33.14$\pm$1.94            & 33.24$\pm$1.82             & \textbf{30.08$\pm$1.27} \\
    poker                     & 49.94$\pm$0.59   & 49.55$\pm$0.68            & 49.63$\pm$0.62             & \textbf{49.19$\pm$0.55} \\
    shuttle                   & 1.40$\pm$0.34    & 1.26$\pm$0.35             & 1.39$\pm$0.26              & \textbf{1.15$\pm$0.16}  \\
    %cifar10  &\textbf{76.18$\rline{72.00$\pm$0.71} &72.13$\pm$0.57 \\
    Sensorless                & 13.31$\pm$0.62   & 13.22$\pm$0.25            & 13.20$\pm$0.41             & \textbf{11.55$\pm$0.28} \\
    MNIST                     & 12.62$\pm$0.27   & 12.57$\pm$0.22            & \underline{12.46$\pm$0.47} & \textbf{12.42$\pm$0.33} \\
    connect-4                 & 30.24$\pm$1.23   & 30.10$\pm$1.65            & 30.02$\pm$0.92             & \textbf{29.03$\pm$1.25} \\
    %SVHN &\textbf{86.32$\pm$1
    acoustic                  & {31.45$\pm$0.56} & {31.27$\pm$0.41}          & {31.44$\pm$0.68}           & \textbf{30.23$\pm$0.76} \\
    covtype                   & 24.24$\pm$0.32   & 24.23$\pm$0.26            & {24.18$\pm$0.21}           & \textbf{24.09$\pm$0.28} \\
    \hline
    %\bottomrule
  \end{tabular}
\end{table}

Compared to our previous work \cite{li2018multi,li2019multi}, here we conduct experiments on larger datasets with fewer labeled examples.

In each partition, we uniformly sample 10\% of training samples as labeled pairs while the remaining 90\% are used as unlabeled instances.
Further, the multiple test errors were obtained to allow the statistical significance of the difference between each method and the optimal result.
We adopt a 95\% significance level in Table \ref{tab.linear-average-error-mc}, \ref{tab.kernel-average-error-mc}. 
For each dataset, we bold the optimal test error and underline results that show no significant difference from the optimal one.

The results in Table \ref{tab.linear-average-error-mc} and Table \ref{tab.kernel-average-error-mc} show:
(1) Our method outperforms the others on all datasets, both in the linear space and appropriate kernel space.
(2) As a classical margin-based multi-class classification model, SRM-VV shows the highest average test errors on all datasets, for both the linear estimator and approximate kernel estimator.
This is likely because it does not use local Rademacher complexity and ignores valuable information from unlabeled data.
(3) SS-VV makes use of unlabeled instances.
LRC-VV minimizes the tail sum of singular values together with the empirical loss and a penalty term for model complexity.
SS-VV or LRC-VV only utilize one additional regularizer and obtain comparable performance, being better than SRM-VV but worse than \texttt{LSVV.}
(4) Approximate kernel approaches always provide better results than linear approaches.
(5) Even using only a small number of random features $D=100,$ the results of approximate approaches are significantly better than linear estimators on \textit{iris}, \textit{pendigits}, \textit{shuttle}, \textit{Sensorless} and \textit{MNIST}.

\begin{table}[t]
  \centering
  \caption{Comparison between predictive performance of \textbf{linear estimators} for \textbf{multi-label learning}.
    The first four datasets are from MLC tasks, where performance indicator is the error rate $(\%)$ by the averaged Hamming loss.
    The last two datasets are MLR where performance indicator is the RMSE.
    }
  \label{tab.linear-average-error-ml}
  \small
  \setlength\extrarowheight{1.8pt}
  \begin{tabular}{c|llll}
    %\toprule
    \hline
    Dataset                   & SRM-VV                         & SS-VV                                            & LRC-VV                     & \texttt{LSVV}            \\ \hline
    scene                     & 31.24$\pm$4.66                    & 30.98$\pm$9.64                                   & 16.99$\pm$0.50             & \textbf{16.76$\pm$0.35}  \\
    yeast                     & 30.29$\pm$3.39                    & 30.08$\pm$0.26                                   & 27.42$\pm$1.92             & \textbf{24.57$\pm$1.57}  \\
    corel5k                   & 28.32$\pm$3.29                    & 26.48$\pm$3.20                                   & 15.14$\pm$2.94             & \textbf{1.77$\pm$0.50}   \\
    bibtex                    & 41.72$\pm$4.03                    & 41.24$\pm$1.52                                   & 20.90$\pm$43.35            & \textbf{20.89$\pm$43.35} \\
    \hline
    rf2                       & 13.42$\pm$2.32                    & 12.13$\pm$0.87                                   & \underline{11.93$\pm$0.31} & \textbf{11.93$\pm$0.15}  \\
    scm1d                     & 19.81$\pm$12.22                   & 15.39$\pm$3.41                                   & 18.78$\pm$24.00            & \textbf{6.46$\pm$3.91}   \\
    \bottomrule
  \end{tabular}
\end{table}

\begin{table}[t]
  \centering
  \caption{Comparison between predictive performance of \textbf{kernel estimators} for \textbf{multi-label learning}.
  The first four datasets are from MLC tasks, where performance indicator is the error rate $(\%)$ by the averaged Hamming loss.
  The last two datasets are MLR where performance indicator is the RMSE.
  }
  \label{tab.kernel-average-error-ml}
  \small
  \begin{tabular}{c|llll}
    %\toprule
    \hline
    Dataset                   & SRM-VV       & SS-VV           & LRC-VV         & \texttt{LSVV}           \\ \hline
    scene                     & 14.99$\pm$0.90  & 14.74$\pm$0.41  & 14.80$\pm$0.89 & \textbf{13.46$\pm$0.83} \\
    yeast                     & 23.05$\pm$0.75  & 22.77$\pm$0.39  & 22.63$\pm$0.45 & \textbf{22.32$\pm$0.47} \\
    corel5k                   & 1.01$\pm$0.03   & 0.98$\pm$0.01   & 0.95$\pm$0.02  & \textbf{0.94$\pm$0.00}  \\
    bibtex                    & {1.48$\pm$0.01} & {1.47$\pm$0.03} & 1.45$\pm$0.02  & \textbf{1.44$\pm$0.04}  \\
    \hline
    rf2                       & 1.19$\pm$0.09   & 1.14$\pm$0.04   & 1.03$\pm$0.07  & \textbf{0.80$\pm$0.05}  \\
    scm1d                     & 0.68$\pm$0.01   & 0.63$\pm$0.03   & 0.68$\pm$0.02  & \textbf{0.53$\pm$0.03}  \\
    \bottomrule
  \end{tabular}
\end{table}

\subsection{Evaluations for Multi-label Learning}
\label{sec.ml-exp}
To study performance on multi-label learning, four multi-label classification tasks and two multi-label regression tasks are utilized.
Labels for multi-label classification samples consist of a series of binary classifications.
We scale all labels of multi-label regression datasets to $[0, 1].$
For multi-label tasks, we consider the case where 50\% of labels are missing for training datasets to validate the efficiency of the Laplacian regularization.
Given a test set $(\xx_i, ~ \yy_i)_{i=1}^{n_t},$ we use the averaged Hamming loss as the criteria to evaluate the generalization performance of multi-label classification:
$
 \text{Error} = \frac{1}{n_tK} \sum_{i=1}^{n_t} \sum_{k=1}^K y_{ik}' \oplus y_{ik}.
$
Here, $\yy_i' = \mathbf{1}_{h(\xx_i) > 0.5}$ and $\oplus$ is the XOR operator.
For the multi-label regression, we employ the averaged root-mean-square error (RMSE):
$
 \text{Error} = \frac{1}{n_tK} \sum_{i=1}^{n_t} \|h(\xx_i) - \yy_i\|_2.
$

Results in Table \ref{tab.linear-average-error-ml} and Table \ref{tab.kernel-average-error-ml} show that
(1) The proposed \texttt{LSVV}
with 100 random features always provides the best empirical results among all eight approaches.
(2) Both Laplacian regularization (SS-VV) and the tail sum of singular values of $\WW$ (LRC-VV) outperform the structural risk minimization method (SRM-VV) on most datasets.

\begin{figure}[t]
  \centering
  \includegraphics[width=0.7\linewidth]{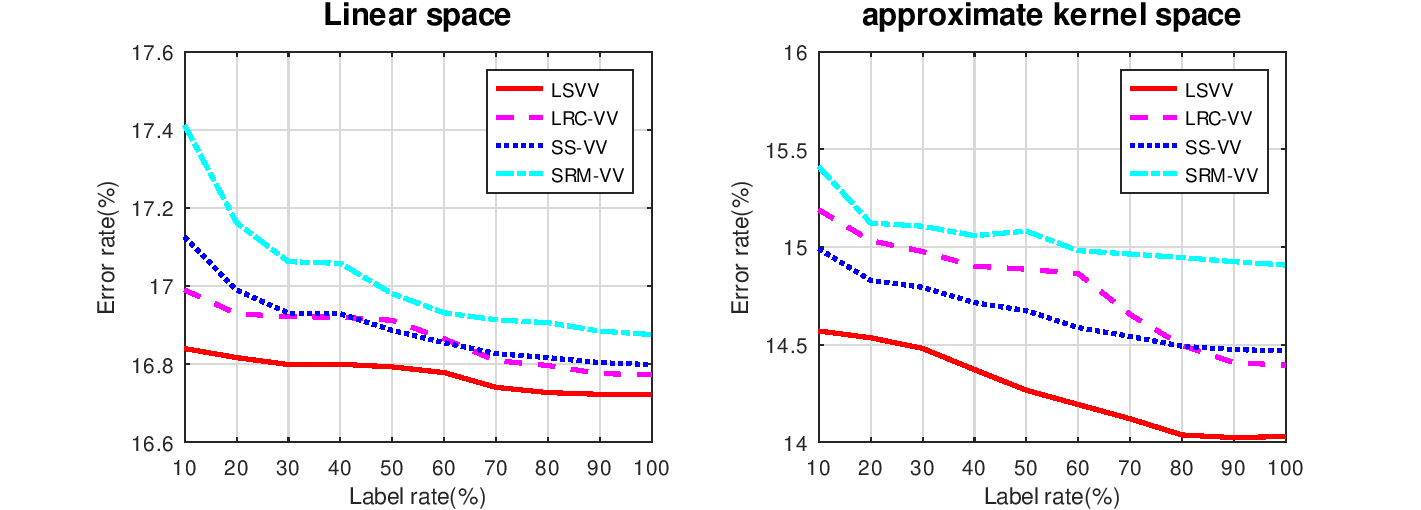}
  \caption{Averaged error rates w.r.t. difference labeled proportions on \textit{scene} dataset.}
  \label{fig.unlabeled}
\end{figure}
\subsection{Impact of the Laplacian Regularization}
In this section, we explore the impact of the Laplacian regularization term by varying the proportion of labeled examples.
We perform all compared algorithms in both the input space and approximate kernel space on the \textit{Scene} dataset.
For every labeled proportion, we divide the \textit{Scene} dataset into labeled and unlabeled datasets uniformly.
We repeat the data partitions and compared experiments $10$ times and record the average error rates in Figure \ref{fig.unlabeled}.
Both \texttt{LSVV} and SS-VV are the semi-supervised algorithms that can leverage the unlabeled examples, while LRC-VV and SRM-VV only train the model on the labeled data.
As shown in Figure \ref{fig.unlabeled},
the test errors of all methods decrease as the number of labeled samples increases, but the semi-supervised algorithm (\texttt{LSVV} and SS-VV) outperform the supervised ones (LRC-VV and SRM-VV).

\subsection{Influence of the Threshold $\theta$}
\begin{figure}[t]
 \centering
 \subfigure[corel5k]{
 \includegraphics[width=0.7\linewidth]{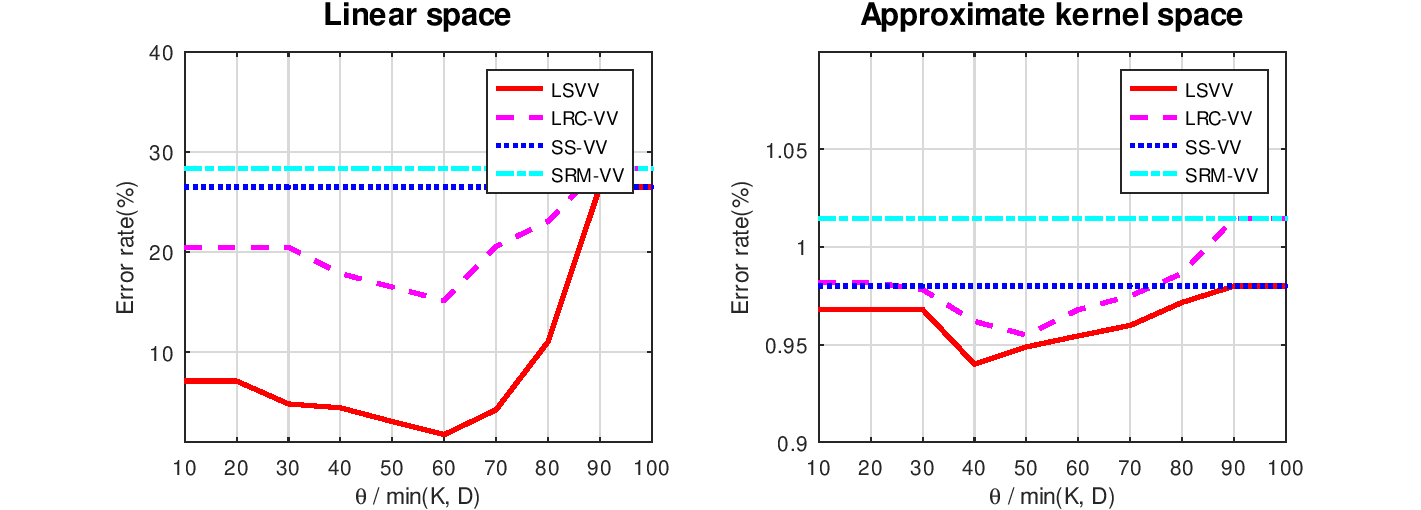}%
 }
 \subfigure[bibtex]{
 \includegraphics[width=0.7\linewidth]{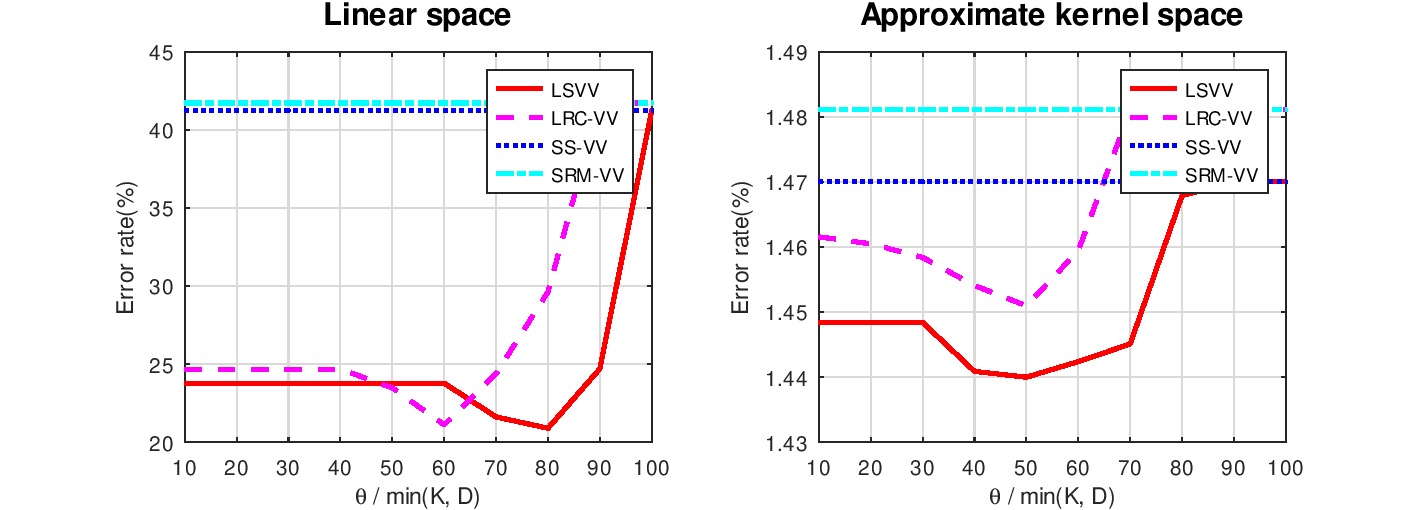}%
 }
 \caption{Averaged error rates w.r.t. the threshold $\theta$ on \textit{corel5k} and \textit{bibtex} datasets.}
 \label{fig.theta}
\end{figure}

We vary the thresholding value $\theta$ for the tail sum of eigenvalues, to determine the importance of an appropriate threshold $\theta$.
If $\theta/\min(K, D) = 1$, there is no constraint on the eigenvalues of the estimator, and thus the proposed algorithm \texttt{LSVV} degrades into SS-VV and LRC-VV degrades into SRM-VV, respectively.
When $\theta = 0$, the tail sum of singular values becomes the trace norm, which corresponds to the \textit{global} Rademacher complexity.
On two representative datasets, we run all compared methods w.r.t the varying threshold $\theta$.

The results are reported in Figure \ref{fig.theta}.
The performance of \texttt{LSVV} is limited when $\theta$ is large and leads to the same performance as that of SS-VV.
As $\theta$ becomes smaller, the Laplacian regularization term considers more eigenvalues that reduce the error rate.
When $\theta=0$, \texttt{LSVV} and LRC-VV use the trace norm of the estimator as the regularization, which leads to preferable performance but is not optimal.
As discussed in Remark \ref{re.appropriate-theta}, an appropriate threshold $\theta$ is key to obtaining a lower error bound.
There always exists an optimal thresholding $\theta$ for \texttt{LSVV} and LRC-VV, offering the lowest error rates.
The empirical results coincide with our theoretical findings (Corollary \ref{cor.lrc-bound-kernel} and Corollary \ref{cor.lrc-bound-linear}).

\section{Conclusion}
\label{sec.conclusion}
Based on our previous works for multi-class classification \cite{li2018multi,li2019multi}, we introduce local Rademacher complexity for vector-valued learning with unlabeled samples.
Compared to our previous work, we remove the strict assumption that the loss function should be smooth and unify the generalization analysis of vector-valued learning for both linear and kernel hypotheses.
Besides, we devise an efficient semi-supervised vector-valued learning with random features to approximate kernel function, which achieves a good tradeoff between computational efficiency and predictive performance.
However, there are still some drawbacks in this paper: 1) The theoretical results cannot directly apply to outputs with more complicated structures, i.e. structured prediction; 2) Our analysis focused on the generalization analysis for kernel methods, ignoring the optimization problems with stochastic gradient descent methods; 3) Our generalization bounds only apply to kernel methods and linear models but fail to apply to deep neural networks.
These limitations motivate us to extend our results to structured prediction, stochastic gradient descent, and deep neural networks in the future.

\section*{Acknowledgment} 

This work was supported in part by Excellent Talents Program of Institute of Information Engineering, CAS, Special Research Assistant Project of CAS (No. E0YY231114), Beijing Outstanding Young Scientist Program (No. BJJWZYJH012019100020098), and National Natural Science Foundation of China (No. 62076234, No. 62106257).

\appendix
\section{Proof}
\subsection{Proof of Theorem \ref{thm.general-lrc-bound}}
\begin{proposition}[First part of Theorem 3.3 in \cite{bartlett2005local}]
  \label{prop.primal-lrc-bound}
    Let $\mathcal{Z}$ be any set, $({z}_1, \cdots, {z}_m) \in \mathcal{Z}^m.$
    For a class of bounded functions $\mathcal{F}: \mathcal{Z} \to \mathbb{R}$ with ranges in $[a, a'].$
    Assume there is some function $T: \mathcal{F} \to \mathbb{R}^+$ and some constant $\alpha$ such that for any $f \in \mathcal{F},$ $\text{Var}(f) \leq T(f) \leq \alpha Pf.$
    Assume there is a sub-root function $\psi$ and a fixed point $r^*$ of $\psi,$ for any $r \geq r^*$ satisfying
    \begin{align}
      \label{eq.primal-lrc-bound.eq1}
      \psi(r) \geq \alpha ~ \mathcal{R} \left(\left\{f \in \mathcal{F}: T(f) \leq r \right\}\right).
    \end{align}
    For any $K > 1$ and any $\delta \in (0,1),$ with a probability of at least $1 - \delta,$
    \begin{equation}
      \begin{aligned}
        \label{eq.primal-lrc-bound.eq2}
        Pf ~ \leq ~ \frac{K}{K-1} \widehat{P} f + c_1 r^*
         ~ + ~ c_2 \frac{\log (1/\delta)}{m} ,
      \end{aligned}
    \end{equation}
    where $c_1 = \frac{704K}{\alpha},$ $c_2 = 11(a'-a)+26K\alpha,$ $P f = \mathbb{E} [f({z})]$ is the expectation and $\widehat{P} f = \frac{1}{m} \sum_{i=1}^m f(\xx_i)$ is the corresponding empirical version on $\mathcal{Z}^m.$
\end{proposition}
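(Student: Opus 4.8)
The plan is to follow the standard local Rademacher complexity machinery underlying Theorem 3.3 of \cite{bartlett2005local}: localize the empirical process using the variance control $\mathrm{Var}(f) \leq T(f)$, concentrate the localized supremum via Talagrand's inequality, symmetrize to bring in $\mathcal{R}$, invoke the sub-root fixed point, and finally extend from the localized sub-class to all of $\mathcal{F}$ by a peeling argument.

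First I would fix $r > 0$, set $\mathcal{F}_r = \{f \in \mathcal{F} : T(f) \leq r\}$, and study $Z_r = \sup_{f \in \mathcal{F}_r}(Pf - \widehat{P}f)$. Because every $f \in \mathcal{F}_r$ satisfies $\mathrm{Var}(f) \leq T(f) \leq r$ and takes values in $[a,a']$, Talagrand's concentration inequality (in Bousquet's form) yields, with probability at least $1-\delta$, a bound of the shape $Z_r \leq \mathbb{E} Z_r + \sqrt{2 r \log(1/\delta)/m} + c\,(a'-a)\log(1/\delta)/m$. Symmetrization then gives $\mathbb{E} Z_r \leq 2\,\mathcal{R}(\mathcal{F}_r)$. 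At this point the hypothesis $\psi(r) \geq \alpha\,\mathcal{R}(\mathcal{F}_r)$ together with the sub-root property of $\psi$ enters: sub-rootness makes $r \mapsto \psi(r)/\sqrt{r}$ nonincreasing and forces the fixed point $r^*$ to be unique, so that for all $r \geq r^*$ one has $\psi(r) \leq \sqrt{r\,r^*}$ and hence $\mathcal{R}(\mathcal{F}_r) \leq \sqrt{r\,r^*}/\alpha$.

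Next I would remove the localization. For a general $f$ the quantity $T(f)$ need not be $\leq r$, so I would stratify $\mathcal{F}$ into geometric shells $\{f : x^{k-1}r < T(f) \leq x^{k} r\}$ for a ratio $x>1$, apply the concentration-and-symmetrization bound on each shell (with $r$ replaced by $x^{k} r$), and take a union bound distributing $\delta$ across shells with geometrically decaying weights. Summing the resulting geometric series, applying Young's inequality $\sqrt{T(f)\,r^*} \leq \tfrac{1}{2}(\lambda T(f) + r^*/\lambda)$ to absorb the complexity term into a small fraction of $T(f)$, and then using $T(f) \leq \alpha Pf$, I would arrive at a uniform statement of the form $Pf - \widehat{P}f \leq \tfrac{1}{K}Pf + c_1 r^* + c_2 \log(1/\delta)/m$. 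Rearranging by moving $\tfrac{1}{K}Pf$ to the left gives $(1-\tfrac1K)Pf \leq \widehat{P}f + c_1 r^* + c_2 \log(1/\delta)/m$, i.e. the claimed inequality with the factor $K/(K-1)$.

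The main obstacle is not the conceptual structure, which is routine, but landing exactly on the constants $c_1 = 704K/\alpha$ and $c_2 = 11(a'-a) + 26K\alpha$. This requires careful bookkeeping throughout the peeling step: choosing the shell ratio $x$ and the Young parameter $\lambda$, allocating the confidence budget across the infinitely many shells so the union bound contributes only an additional logarithmic factor, and combining these with the explicit Talagrand constants so that the variance term $\sqrt{r\log(1/\delta)/m}$ is also absorbed (again via Young's inequality) into $\tfrac1K Pf$ plus a $\log(1/\delta)/m$ remainder. The delicate part is ensuring all these absorptions share the same $\tfrac1K$ budget while the leftover additive constants aggregate precisely to the stated $c_1$ and $c_2$.
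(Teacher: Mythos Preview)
The paper does not supply its own proof of this proposition: it is quoted verbatim as the first part of Theorem~3.3 in \cite{bartlett2005local} and used as a black box in the proof of Theorem~\ref{thm.general-lrc-bound}. So there is nothing in the paper to compare your proposal against.

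That said, your sketch is an accurate summary of the argument in \cite{bartlett2005local}: Talagrand/Bousquet concentration on the localized class, symmetrization to pass to the Rademacher average, the sub-root inequality $\psi(r)\leq\sqrt{r\,r^*}$ for $r\geq r^*$, a peeling over geometric shells in $T(f)$, and repeated use of Young's inequality to absorb the $\sqrt{T(f)r^*}$ and $\sqrt{r\log(1/\delta)/m}$ terms into $\tfrac1K Pf$ before invoking $T(f)\leq\alpha Pf$. Your remark that the only real difficulty is bookkeeping to hit the specific constants $704K/\alpha$ and $11(a'-a)+26K\alpha$ is exactly right; the conceptual structure is standard and your outline would go through.
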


Based on Proposition \ref{prop.primal-lrc-bound}, we prove Theorem \ref{thm.general-lrc-bound} as follows.

\begin{proof}\textbf{of Theorem \ref{thm.general-lrc-bound}.}
  According to Proposition \ref{prop.primal-lrc-bound}, we set $$f = \ell(\widehat{h}(\xx), \yy) - \ell(h^*(\xx), \yy)$$ for any $(\xx, \yy) \in \mathcal{X} \times \mathcal{Y}.$ 
  Here, $\widehat{h}\in \mathcal{H}$ corresponds to the estimator with the minimal empirical loss and $h^* \in \mathcal{H}$ is the estimator with the minimal expectation loss.
  Thus, we have $Pf = \mathcal{E}(\widehat{h}) - \mathcal{E}(h^*) .$

  \textbf{First step: (\ref{eq.primal-lrc-bound.eq2}) $\to$ (\ref{eq.lrc-bound.eq2}).}
  Since $\widehat{h}$ indicates the minimal empirical loss, we have $\widehat{P} f \leq \widehat{\mathcal{E}}(\widehat{h}) - \widehat{\mathcal{E}}(h^*) \leq 0.$
  Thus we just omit the $\widehat{P} f$ term in (\ref{eq.primal-lrc-bound.eq2}) and get
  \begin{equation}
    \label{eq.proof.lrc-bound.eq1}
    \begin{aligned}
      \mathcal{E}(\widehat{h}) - \mathcal{E}(h^*)
      \leq ~ c_1 r^* + c_2\frac{\log(1/\delta)}{n}.
    \end{aligned}
  \end{equation}
  Beacause the loss function $\ell$ is bounded in $[0, B],$ it holds $f \in [0, ~ B],$ $a' = B$ and $a = 0$ in Proposition \ref{prop.primal-lrc-bound}.
  However, $\mathcal{E}(h^*)$ is also the expected infimum in the loss space, so we have $Pf = \mathcal{E}(\widehat{h}) - \mathcal{E}(h^*) \in [0, B].$ The variance exists $\text{Var}(f) = Pf^2 - [Pf]^2 \leq Pf^2 \leq B Pf.$
  We set $T(f) = Pf^2$ and then have $\alpha = B.$
  By setting $\alpha = B,$ $a=0,$ $a'=B$ and a small $K > 1$ in (\ref{eq.proof.lrc-bound.eq1}), the equation (\ref{eq.lrc-bound.eq2}) is obtained.

  \textbf{Second step: (\ref{eq.primal-lrc-bound.eq1}) $\to$ (\ref{eq.lrc-bound.eq1}).}
  We consider the variance $T(f) = Pf^2.$ 
  Thus, $\mathcal{R} \left(\left\{f \in \mathcal{F}: T(f) \leq r \right\}\right)$ becomes $\mathcal{R} (\{\mathbb{E} [\ell_{\widehat{h}} - \ell_{h^*}]^2 \leq r \}),$ where $\ell_{h} = \ell(h(\xx), \yy)$ for any $\ell \in \mathcal{L}$ and $(\xx, \yy) \in \mathcal{X} \times \mathcal{Y},$ which concides with local Rademacher complexity $\mathcal{R}(\mathcal{H}_r)$ in Definition \ref{def.lrc-loss}.
  As such (\ref{eq.primal-lrc-bound.eq1}) requires a sub-root function $\psi_1$ satisfying
  \begin{align}
    \label{eq.proof.lrc-bound.eq2}
    \psi_1(r) \geq B \mathcal{R}(\mathcal{L}_r).
  \end{align}
  Using the contraction property in Lemma \ref{lem.contraction}, we have
  \begin{align}
    \label{eq.proof.lrc-bound.eq3}
    \sqrt{2} B L \mathcal{R}(\mathcal{H}_r) \geq B \mathcal{R}(\mathcal{L}_r).
  \end{align}
  We consider a sub-root function $\psi(r)$ such that
  \begin{align}
    \label{eq.proof.lrc-bound.eq5}
    \psi(r) \geq \sqrt{2} B L \mathcal{R}(\mathcal{H}_r).
  \end{align}
  Combining (\ref{eq.proof.lrc-bound.eq3}) and (\ref{eq.proof.lrc-bound.eq5}), we then find that the sub-root function $\psi$ satisfies the condition (\ref{eq.proof.lrc-bound.eq2}) and we finally get the condition in (\ref{eq.lrc-bound.eq1}).
\end{proof}

\subsection{Proof of Theorem \ref{thm.estimate-lrc-kernel}}
\begin{proof} \textbf{of Theorem \ref{thm.estimate-lrc-kernel}.}
  Due to the contraction lemma, the symmetry of the $\epsilon_i$ and $\|h\|_2 \leq \|h\|_1,$ there exists
  \begin{align*}
    \mathcal{R}(\mathcal{H}_r) & = \mathcal{R}(h\in \mathcal{H}_p:\mathbb{E}[L^2\|{h}-h^*\|_2^2] \leq r)                     \\
                               & =\mathcal{R}(h - h^*:h\in \mathcal{H}_p, \mathbb{E}[\|{h}-h^*\|_2^2] \leq \frac{r}{L^2})    \\
                               & \leq \mathcal{R}(h - g:h, g\in \mathcal{H}_p, \mathbb{E}[\|{h}-g\|_2^2] \leq \frac{r}{L^2}) \\
                               & =2\mathcal{R}(h:h\in \mathcal{H}_p, \mathbb{E}[\|h\|_2] \leq \frac{\sqrt{r}}{2L})           \\
                               & \leq 2\mathcal{R}(h:h\in \mathcal{H}_p, \mathbb{E}[\|h\|_1] \leq \frac{\sqrt{r}}{2L}).
  \end{align*}
  Let $\|\WW\|_p=\|\WW\|_*.$
  We introduce a new hypothesis space $\mathcal{H}_{2,1},$ satisfying
  \begin{align}
    \label{eq.LRC_norm}
    \mathcal{R}(\mathcal{H}_r) \leq 2\mathcal{R}(\mathcal{H}_{2,1})
  \end{align}
  where
  \begin{align*}
    \mathcal{H}_{2,1}=\{\xx \to \WW^T\phi(\xx):
    \|\WW\|_* \leq 1, \mathbb{E}[\|{h}\|_1] \leq \frac{\sqrt{r}}{2L}\}.
  \end{align*}
  For any $\theta \in \mathbb{N},$ there holds for local Rademacher complexity
  \begin{equation}
    \label{eq.decomposition}
    \begin{split}
      &\frac{1}{n+u}\sum_{i=1}^{n + u}\sum_{k=1}^K \ee_{ik}\langle\WW_{\cdot k},\phi(\xx_i)\rangle
      =~\frac{1}{n+u}\sum_{k=1}^K\bigg\langle\WW_{\cdot k}, \sum_{i=1}^{n + u} \ee_{ik}\phi(\xx_i)\bigg\rangle\\
      =~&\sum_{k=1}^K \Bigg[\bigg\langle\sum_{j=1}^\theta \sqrt{\lambda_j}\langle\WW_{\cdot k}, \varphi_j\rangle\varphi_j, \sum_{j=1}^\theta\frac{1}{\sqrt{\lambda_j}}\bigg\langle\frac{1}{n+u}\sum_{i=1}^{n + u}\ee_{ik}\phi(\xx_i), \varphi_j\bigg\rangle\varphi_j\bigg\rangle \\
      &\quad+\bigg\langle\WW_{\cdot k}, \sum_{j>\theta}\bigg\langle\frac{1}{n+u}\sum_{i=1}^{n + u}\ee_{ik}\phi(\xx_i),\varphi_j\bigg\rangle\varphi_j\bigg\rangle\Bigg].\\
    \end{split}
  \end{equation}

  Following the proof of Theorem 3 \cite{cortes2013learning}, the Cauchy-Schwarz inequality and Jensen's inequality, the above inequality (\ref{eq.decomposition}) yeilds
  \begin{equation}
    \label{eq.LRC-estimate}
    \begin{split}
      &\mathcal{R}(\mathcal{H}_{2,1}) 
      =~ \mathbb{E} ~ \left[\sup_{h\in\mathcal{H}_{2,1}}\sum_{i=1}^{n + u}\sum_{k=1}^K
      \ee_{ik}\big\langle\WW_{\cdot k},\phi(\xx_i)\big\rangle\right]\\
      \leq~ &\sup_{h\in\mathcal{H}_{2,1}} \sum_{k=1}^K \sqrt{\Bigg(\sum_{j=1}^\theta\lambda_j\langle\WW_{\cdot k},\varphi_j\rangle^2\Bigg)
      \Bigg(\frac{1}{n+u}\sum_{j=1}^\theta\frac{1}{\lambda_j}\mathbb{E} ~ [\langle \phi(\xx), \varphi_j\rangle^2]\Bigg)} \\
      &+ \|\WW\|_*
      \sqrt{\frac{1}{n+u}\sum_{j>\theta}\mathbb{E} ~ [\langle \phi(\xx), \varphi_j\rangle^2]}.
    \end{split}
   \end{equation}
  By the eigenvalue decomposition, it holds
  $\mathbb{E}[|h_y|]=\sqrt{\sum_{j=1}^\infty\lambda_j\langle\WW_{\cdot k},\varphi_j\rangle^2}$
  such that
  \begin{align}
    \label{eq.eigenvalue_decomposition}
    \sum_{k=1}^K\sqrt{\sum_{j=1}^\theta\lambda_j\langle\WW_{\cdot k},\varphi_j\rangle^2}
      \leq \mathbb{E}[\|h\|_1] \leq \frac{\sqrt{r}}{2L},
  \end{align}
  and 
  \begin{equation}
    \label{eq.expected_Pi}
    \begin{aligned}
      \mathbb{E}[\langle\phi(\xx), \varphi_j\rangle^2] = \lambda_j
    \end{aligned}
  \end{equation}
  Substituting \eqref{eq.eigenvalue_decomposition} and \eqref{eq.expected_Pi} into \eqref{eq.LRC-estimate} and using $\|\WW\|_* \leq 1$, we have
  \begin{equation}
    \begin{split}
      \mathcal{R}(\mathcal{H}_r) 
      ~\leq~ 2\mathcal{R}(\mathcal{H}_{2,1})
      ~\leq~ \min_{0 \leq \theta \leq 0+u}
      \frac{1}{L}\sqrt{\frac{\theta r}{n+u}}
      +~ 2\sqrt{\sum_{j=\theta+1}^{n+u}\frac{\lambda_j}{n+u}}.
    \end{split}
  \end{equation}
  Applying the above result into \eqref{eq.LRC_norm}, we complete the proof.
\end{proof}

\subsection{Proof of Theorem \ref{thm.estimate-lrc-linear}}
\begin{proof}\textbf{of Theorem \ref{thm.estimate-lrc-linear}.}
  Due to the contraction lemma, the symmetry of Rademacher variables and $\mathbb{E} [\phi(\xx)^T\phi(\xx)] \leq 1,$ we have
  \begin{equation}
    \begin{aligned}
      \label{thm.estimate-lrc-linear.eq-00}
      &\mathcal{R}(\mathcal{H}_r)\\
      = ~ &\mathcal{R}\big(h\in \mathcal{H}_p:\mathbb{E}[L^2\|{h}-h^*\|_2^2] \leq r\big)\\
      = ~ &\mathcal{R}\big(h - h^*:h\in \mathcal{H}_p, \mathbb{E}[\|{h}-h^*\|_2^2] \leq \frac{r}{L^2}\big) \\
      \leq ~ &\mathcal{R}\big(h - g:h, g\in \mathcal{H}_p, \mathbb{E}[\|{h}-g\|_2^2] \leq \frac{r}{L^2}\big) \\
      = ~ &2 ~\mathcal{R}\big(h:h\in \mathcal{H}_p, \mathbb{E}[\|h\|_2^2] \leq \frac{r}{4L^2}\big)\\
      = ~ &2 ~\mathcal{R}\big(h:h\in \mathcal{H}_p, \mathbb{E}[\phi(\xx)^T\WW\WW^T\phi(\xx)] \leq \frac{r}{4L^2}\big)\\
      = ~ &2 ~\mathcal{R}\big(h:h\in \mathcal{H}_p, \mathbb{E}[\|\WW\WW^T\|] \leq \frac{\sqrt{r}}{2L}\big) \\
      = ~ &2 ~\mathcal{R}\big(\mathcal{H}_r^{\WW}\big).
    \end{aligned}
  \end{equation}
  The inequalities above provide a constraint on $\WW$ that is $\mathbb{E}[\|\WW\WW^T\|] \leq \frac{\sqrt{r}}{2L}$, which is useful when we reduce terms related to $\WW$.
  Then, local Rademacher complexity $\mathcal{R}\big(\mathcal{H}_r^{\WW}\big)$ can be rewritten as
  \begin{equation}
    \label{thm.estimate-lrc-linear.eq-0}
    \begin{aligned}
      &\mathcal{R}\big(\mathcal{H}_r^{\WW}\big)\\
      = ~&\mathbb{E}
      \left[\sup_{h \in\mathcal{H}_r^{\WW}}
      \frac{1}{n + u}\sum_{i=1}^{n + u} \sum_{k=1}^K \ee_{ik} h_j(\xx_i) \right] \\
      = ~&\mathbb{E}
      \left[\sup_{h \in\mathcal{H}_r^{\WW}}
      \frac{1}{n + u}\sum_{i=1}^{n + u} \sum_{k=1}^K \ee_{ik} \WW_{\cdot j}^T \phi(\xx_i) \right] \\
      = ~&\mathbb{E}
      \left[\sup_{h \in\mathcal{H}_r^{\WW}}
      \sum_{k=1}^K \WW_{\cdot j}^T \left(\frac{1}{n + u}\sum_{i=1}^{n + u} \ee_{ik} \phi(\xx_i)\right) \right] \\
      = ~&\mathbb{E}
      \left[\sup_{h \in\mathcal{H}_r^{\WW}}
      \left\langle\WW, \XX_{\ee}\right\rangle
      \right],
    \end{aligned}
  \end{equation}
  where $\WW, \XX_{\ee} \in \mathbb{R}^{D \times K}$ and $\left\langle\WW, \XX_{\ee}\right\rangle = \text{Tr}\big(\WW^T\XX_{\ee}\big)$ represents the trace norm.
  We define the matrix $\XX_{\ee}$ as follows:
  \begin{align*}
      \XX_{\ee} := \left[\frac{1}{n + u}
      \sum_{i=1}^{n + u} \ee_{i1}\phi(\xx_i),
          \cdots, \frac{1}{n + u}
          \sum_{i=1}^{n + u} \ee_{iK}\phi(\xx_i)\right].
  \end{align*}
  Borrowing the proof sketches of Thereom 5 in \cite{xu2016local},
  we consider the SVD decomposition
  \begin{align*}
    \WW = \sum_{j \geq 1} \boldsymbol{u}_j \boldsymbol{v}_j^T \tilde{\lambda}_j,
  \end{align*}
  where $\boldsymbol{u}_j$ and $\boldsymbol{v}_j$ are the orthogonal vectors.
  It holds the following inequalities 
  \begin{align*}
    &\left\langle \WW, \XX_{\ee} \right\rangle \\
    \leq ~ &\sum_{j=1}^\theta \left\langle \boldsymbol{u}_j \boldsymbol{v}_j^T \tilde{\lambda}_j, \XX_{\ee} \boldsymbol{u}_j \boldsymbol{u}_j^T \right\rangle
    + \sum_{j>\theta}\left\langle \WW, \XX_{\ee} \boldsymbol{u}_j \boldsymbol{u}_j^T \right\rangle \\
    \leq ~ &\left\langle \sum_{j=1}^\theta \boldsymbol{u}_j \boldsymbol{v}_j^T \tilde{\lambda}_j, \sum_{j=1}^\theta \XX_{\ee} \boldsymbol{u}_j \boldsymbol{u}_j^T \right\rangle
    + \left\langle \WW, \sum_{j>\theta} \XX_{\ee} \boldsymbol{u}_j \boldsymbol{u}_j^T \right\rangle \\
    \leq ~ &\left\|\sum_{j=1}^\theta \boldsymbol{u}_j \boldsymbol{v}_j^T \tilde{\lambda}_j^2\right\| \left\|\sum_{j=1}^\theta \XX_{\ee} \boldsymbol{u}_j \boldsymbol{u}_j^T\tilde{\lambda}_j^{-1}\right\| 
    + \| \WW \|_* \left\|\sum_{j>\theta} \XX_{\ee} \boldsymbol{u}_j \boldsymbol{u}_j^T \right\|.
  \end{align*}
  Then, we begin to bound the norm terms in the above inequalities.
  According to the definition of $\mathcal{H}_r^{\WW},$ it holds that $\mathbb{E}[\|\WW\WW^T\|] \leq \frac{\sqrt{r}}{2L}.$
  Thus, we have
  \begin{equation}
    \label{thm.estimate-lrc-linear.eq-1}
    \begin{aligned}
      &\Big\|\sum_{j=1}^\theta \boldsymbol{u}_j \boldsymbol{v}_j^T \tilde{\lambda}_j^2\Big\|
      =\Big\|\sum_{j=1}^\theta \boldsymbol{u}_j \boldsymbol{u}_j^T \tilde{\lambda}_j^2\Big\| \\
      \leq ~ &\Big\|\sum_{j=1}^\infty \boldsymbol{u}_j \boldsymbol{u}_j^T \tilde{\lambda}_j^2\Big\|
      = \Big\|\mathbb{E}[\WW\WW^T]\Big\|
      \leq \frac{\sqrt{r}}{2L}.
    \end{aligned}
  \end{equation}
  Using the properties of SVD decomposition, there exists
  \begin{equation}
    \label{thm.estimate-lrc-linear.eq-2}
    \begin{aligned}
      \mathbb{E} \left[\Big\|\sum_{j=1}^\theta \XX_{\ee} \boldsymbol{u}_j \boldsymbol{u}_j^T\tilde{\lambda}_j^{-1}\Big\|\right]
      = \mathbb{E} \left[\sqrt{\sum_{j=1}^\theta \tilde{\lambda}_j^{-2} \langle \XX_{\ee}, \boldsymbol{u}_j \rangle^2}\right] 
      \leq ~ \sqrt{\sum_{j=1}^\theta \frac{\tilde{\lambda}_j^{-2}}{n + u} \mathbb{E} [\langle \phi(\xx), \boldsymbol{u}_j \rangle^2] }
      ~ \leq ~ \sqrt{\frac{\theta}{n + u}}.
    \end{aligned}
  \end{equation}
  Then, we also have
  \begin{align}
    \label{thm.estimate-lrc-linear.eq-3}
    \mathbb{E} &\left[\Big\|\sum_{j>\theta} \XX_{\ee} \boldsymbol{u}_j \boldsymbol{u}_j^T \Big\|\right] \leq \sqrt{\frac{1}{n + u}\sum_{j > \theta} \tilde{\lambda}_j^2}.
  \end{align}
  We set the norm of $\|\WW\|_p$ in $\mathcal{H}_r^{\WW}$ as trace norm
  \begin{align}
    \label{thm.estimate-lrc-linear.eq-4}
    \|\WW\|_* \leq 1.
  \end{align}
  Substituting (\ref{thm.estimate-lrc-linear.eq-1}), (\ref{thm.estimate-lrc-linear.eq-2}), (\ref{thm.estimate-lrc-linear.eq-3}) and (\ref{thm.estimate-lrc-linear.eq-4}) into (\ref{thm.estimate-lrc-linear.eq-0}),
  we then have
  \begin{equation}
    \label{thm.estimate-lrc-linear.eq-5}
    \begin{aligned}
      \mathcal{R}\big(\mathcal{H}_r^{\WW}\big)
      = ~ \mathbb{E} \left[\sup_{h \in\mathcal{H}_r^{\WW}} \Big\langle\WW, \XX_{\ee}\Big\rangle \right] 
      \leq ~ \min_{0 \leq \theta} \frac{1}{2L}\sqrt{\frac{\theta r}{n + u}} + \sqrt{\frac{1}{n + u}\sum_{j > \theta} \tilde{\lambda}_j^2}.
    \end{aligned}
  \end{equation}
  Combining (\ref{thm.estimate-lrc-linear.eq-00}) and (\ref{thm.estimate-lrc-linear.eq-5}), we have
  \begin{equation}
    \begin{split}
      \mathcal{R}(\mathcal{H}_r)
      ~\leq~ 2\mathcal{R}\big(\mathcal{H}_r^{\WW}\big)
      ~\leq~ \min_{0 \leq \theta}
      \frac{1}{L}\sqrt{\frac{\theta r}{n+u}}
      +~ 2\sqrt{\sum_{j=\theta+1}^{n+u}\frac{\tilde{\lambda}_j^2}{n+u}}.
    \end{split}
  \end{equation}
  We complete the proof.
\end{proof}

\section{Optimization}
Inspired by generalized SVT methods, our previous work \cite{li2019multi} proposed a partly singular values thresholding learning framework based on the proximal gradient for the linear hypotheses.
To solve the minimization (\ref{eq.alg-obj}) in both the linear hypothesis space and approximate kernel hypothesis space, in this paper, we extend the previous algorithm with feature mappings and adaptive learning rates.
As illustrated in Algorithm \ref{alg.lsvt}, updating $\WW$ requires two steps:
(1) updating $\WW$ using mini-batch gradient descent on $g(\WW)$; 
(2) updating partial singular values to minimize $\sum_{j > \theta} \tilde{\lambda}_j(\WW).$

\subsubsection{Updating $\WW$ with Proximal Gradient Descent}
Consider the mini-batch gradient descent with $m$ samples,
\begin{align}
  \label{eq.first-step}
  \boldsymbol{Q}_t=\WW_t-\eta_t\nabla g(\WW_t),
\end{align}
where $\nabla g(\WW_t)$ is the derivative of differentiable terms
\begin{equation}
  \label{eq.gradient_of_g}
 \nabla g(\WW_t) = \frac{1}{m}\sum_{i=1}^{m} \frac{\partial ~\ell(h(\xx_i), ~ \yy_i)}{\partial~ \WW_t}
 + 2\tau_A \WW_t
 + 2\tau_I\widetilde{\mathbf{X}}\boldsymbol{L}\widetilde{\mathbf{X}}^T\WW_t.
\end{equation}
Here, $\eta_t$ is the learning rate for the $t$-iteration.
The term $\widetilde{\mathbf{X}}\boldsymbol{L}\widetilde{\mathbf{X}}^T$ is constant and time-consuming, thus we compute it before the iteration as shown in Algorithm \ref{alg.lsvt} (line 3).

\subsection{Updating $\WW$ with Singular Values Thresholding}
Consider singular value decomposition ${\boldsymbol U}{\boldsymbol \Sigma}{\boldsymbol V}^T = \boldsymbol{Q}_t,$ where ${\boldsymbol U} \in \mathbb{R}^{d \times d}$ and ${\boldsymbol V} \in \mathbb{R}^{K \times K}$ are orthogonal matrices, and ${\boldsymbol \Sigma}$ is diagonal with nonincreasing singular values
\begin{align}
  \label{eq.lsvt}
  \WW_{t+1}={\boldsymbol U}{\boldsymbol \Sigma}_\tau^\theta{\boldsymbol V}^T.
\end{align}
Here, with $\tau=\eta_t\tau_S,$ only first $\theta$ singular values are updated
\begin{align*}
  ({\boldsymbol \Sigma}_\tau^\theta)_{jj}=
  \left\{
  \begin{aligned}
       & |{\boldsymbol \Sigma}_{jj}-\tau|_+ & j \leq \theta, \\
       & {\boldsymbol \Sigma}_{jj},      & j > \theta.
  \end{aligned}
  \right.
\end{align*}

\subsection{Partly Singular Values Thresholding}
\label{sec.psvt}
In each iteration, to obtain a tight surrogate of Eq. (\ref{eq.alg-obj}), we keep $\tau_S \sum_{j > \theta} \lambda_j(\WW)$ while only relaxing $g(\WW)$, leading to a proximal regularization of $g(\WW)$ at $\WW_t$
\begin{align}
    \label{eq.proximal-gradient}
    \begin{split}
        \WW_{t+1}
        =~&\argmin_{\WW} ~ g(\WW_t) + \langle\nabla g(\WW_t),\WW-\WW_t \rangle
        ~+~\frac{1}{2\eta_t}\|\WW-\WW_t\|_F^2 +  \tau_S \sum_{j > \theta} \lambda_j(\WW)\\
        =~&\argmin_{\WW} ~ \frac{1}{2\eta_t}\|\WW-(\WW_t-\eta_t\nabla g(\WW_t)\|_F^2
        ~+~\tau_S \sum_{j > \theta} \lambda_j(\WW),
    \end{split}
\end{align}
where $\eta_t$ is the learning rate at the $t$-th iteration to update gradients, $\nabla g(\WW_t)$ is the derivative of $g(W)$ at $\WW_t$ and terms independent on $\WW$ are ignored.

\begin{proposition}[Theorem 6 of \cite{xu2016local}]
    \label{thm.prop-csvt}
    Let $\boldsymbol{Q}_t\in\mathbb{R}^{D \times K}$ with rank $r.$
    Its SVD decomposition is $\boldsymbol{Q}_t={\boldsymbol U}{\boldsymbol \Sigma}{\boldsymbol V}^T,$
    where ${\boldsymbol U} \in \mathbb{R}^{d \times r}$ and ${\boldsymbol V} \in \mathbb{R}^{K \times r}$ have orthogonal columns and ${\boldsymbol \Sigma}$ is diagonal. Then, it holds
    \begin{align}
      \label{eq.thm-svt}
        \mathcal{D}_\tau^\theta(\boldsymbol{Q}_t)=\argmin_{\WW}\left\{\frac{1}{2}\|\WW-\boldsymbol{Q}_t\|_F^2+\tau\sum_{j>\theta}\lambda_j(\WW)\right\},
    \end{align}
    is given by $\mathcal{D}_\tau^\theta={\boldsymbol U}{\boldsymbol \Sigma}_\tau^\theta{\boldsymbol V}^T,$
    where ${\boldsymbol \Sigma}_\tau^\theta$ is diagonal with
    \begin{align*}
        ({\boldsymbol \Sigma}_\tau^\theta)_{jj}=
        \left\{
        \begin{aligned}
             & |{\boldsymbol \Sigma}_{jj}-\tau|_+ & j \leq \theta, \\
             & {\boldsymbol \Sigma}_{jj},      & j > \theta.
        \end{aligned}
        \right.
    \end{align*}
\end{proposition}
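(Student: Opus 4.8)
The plan is to reduce the matrix minimization (\ref{eq.thm-svt}) to a separable scalar problem over the singular values of $\WW$, solve that scalar problem by one-dimensional soft thresholding, and then verify that the resulting singular-value vector is admissible (non-increasing and non-negative), so that it lifts back to a global minimizer of (\ref{eq.thm-svt}). Note that the objective in (\ref{eq.thm-svt}) is a \emph{difference} of convex functions (its penalty is a nuclear-norm-type term minus a Ky--Fan-type term in $\WW$), hence not convex overall, so convex duality is not directly available and the reduction must be carried out by hand.

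For the reduction I would expand $\|\WW-\boldsymbol{Q}_t\|_F^2 = \|\WW\|_F^2 - 2\langle\WW,\boldsymbol{Q}_t\rangle + \|\boldsymbol{Q}_t\|_F^2$ and apply von Neumann's trace inequality $\langle\WW,\boldsymbol{Q}_t\rangle \le \sum_j \lambda_j(\WW)\lambda_j(\boldsymbol{Q}_t)$ to get $\|\WW-\boldsymbol{Q}_t\|_F^2 \ge \sum_j\big(\lambda_j(\WW)-\lambda_j(\boldsymbol{Q}_t)\big)^2$, with equality precisely when $\WW$ and $\boldsymbol{Q}_t$ admit a \emph{simultaneous}, order-matched singular value decomposition, i.e. $\WW={\boldsymbol U}\,\mathrm{diag}(w){\boldsymbol V}^T$ using the same ${\boldsymbol U},{\boldsymbol V}$ as $\boldsymbol{Q}_t={\boldsymbol U}{\boldsymbol \Sigma}{\boldsymbol V}^T$. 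Since $\tau\sum_{j>\theta}\lambda_j(\WW)$ depends only on the singular values of $\WW$, it is unchanged under this substitution, so restricting the minimization to matrices of that form loses nothing; a rearrangement argument (sorting $w$ does not increase the fidelity term and leaves the penalty untouched, as the penalty sees only the multiset of values) further lets us take $w_1\ge w_2\ge\cdots\ge 0$, matched to ${\boldsymbol \Sigma}_{11}\ge{\boldsymbol \Sigma}_{22}\ge\cdots$.

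The problem then becomes $\min_{w_1\ge\cdots\ge 0}\ \tfrac12\sum_j(w_j-{\boldsymbol \Sigma}_{jj})^2 + \tau\sum_{j>\theta}w_j$, which is separable apart from the ordering constraint. For the $\theta$ coordinates that do not enter the penalty, the optimum is $w_j={\boldsymbol \Sigma}_{jj}$; for the coordinates that do enter it, the one-dimensional problem $\min_{w\ge0}\tfrac12(w-{\boldsymbol \Sigma}_{jj})^2+\tau w$ is solved by the soft-threshold $|{\boldsymbol \Sigma}_{jj}-\tau|_+$ (examine the stationarity condition together with the boundary case $w=0$). It then remains to check that relaxing the ordering constraint was harmless: the candidate vector is still non-increasing because $t\mapsto|t-\tau|_+$ is non-decreasing, $|{\boldsymbol \Sigma}_{jj}-\tau|_+\le{\boldsymbol \Sigma}_{jj}$, and ${\boldsymbol \Sigma}$ is already sorted, so in particular the junction inequality ${\boldsymbol \Sigma}_{\theta\theta}\ge|{\boldsymbol \Sigma}_{\theta+1,\theta+1}-\tau|_+$ holds; hence this vector is the constrained scalar minimizer, it equals the diagonal ${\boldsymbol \Sigma}_\tau^\theta$, and lifting back gives $\mathcal{D}_\tau^\theta(\boldsymbol{Q}_t)={\boldsymbol U}{\boldsymbol \Sigma}_\tau^\theta{\boldsymbol V}^T$. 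I expect the first step to be the main obstacle: because the objective is non-convex one must genuinely invoke von Neumann's inequality, and handle the non-uniqueness of the SVD and possible ties among the ${\boldsymbol \Sigma}_{jj}$, to conclude that a simultaneously-diagonalized, order-matched minimizer exists; once that is in place the rest is the routine soft-thresholding computation already used for generalized singular value thresholding in \cite{lu2015generalized,xu2016local}.
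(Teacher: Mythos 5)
First, note that the paper does not prove this proposition at all: it is imported verbatim as Theorem~6 of \cite{xu2016local} and used as a black box, so there is no in-paper argument to compare your route against. Your strategy --- expand the Frobenius term, invoke von Neumann's trace inequality to reduce to a scalar problem over ordered singular values, solve the separable relaxation by coordinatewise soft thresholding, and then verify that the candidate respects the ordering constraint (including the junction inequality ${\boldsymbol \Sigma}_{\theta\theta}\geq|{\boldsymbol \Sigma}_{\theta+1,\theta+1}-\tau|_+$) --- is exactly the standard proof of partial singular value thresholding used in \cite{lu2015generalized} and \cite{xu2016local}, and every individual step you describe is sound, including the observation that the tail-sum penalty is a difference of convex functions so that von Neumann's inequality, not convex duality, must carry the diagonalization.

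The genuine problem is your last sentence of the computation: the vector your derivation produces is $w_j={\boldsymbol \Sigma}_{jj}$ for $j\leq\theta$ and $w_j=|{\boldsymbol \Sigma}_{jj}-\tau|_+$ for $j>\theta$ (keep the head, shrink the tail), but the matrix ${\boldsymbol \Sigma}_\tau^\theta$ defined in the proposition does the opposite (shrink the head, keep the tail), so the claim that your minimizer ``equals the diagonal ${\boldsymbol \Sigma}_\tau^\theta$'' is false as written. Your version is the correct proximal operator for the penalty $\tau\sum_{j>\theta}\lambda_j(\WW)$ --- and is what Theorem~6 of \cite{xu2016local} actually asserts --- whereas the formula printed in the proposition (and echoed in Algorithm~\ref{alg.lsvt} as ``reducing first $\theta$ singular values'') cannot be the minimizer: taking $\theta=0$ reduces the penalty to the full trace norm $\tau\|\WW\|_*$, whose proximal map soft-thresholds \emph{every} singular value, while the printed formula returns $\boldsymbol{Q}_t$ unchanged. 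Your feasibility check also betrays the mismatch: you verify ${\boldsymbol \Sigma}_{\theta\theta}\geq|{\boldsymbol \Sigma}_{\theta+1,\theta+1}-\tau|_+$, which is the junction condition for the head-preserving operator; the printed operator would instead need $|{\boldsymbol \Sigma}_{\theta\theta}-\tau|_+\geq{\boldsymbol \Sigma}_{\theta+1,\theta+1}$, which fails already when ${\boldsymbol \Sigma}_{\theta\theta}={\boldsymbol \Sigma}_{\theta+1,\theta+1}$ and $\tau>0$. So either flag the two cases in the statement as transposed (a transcription error relative to \cite{xu2016local}) and prove the corrected statement, which your argument does, or accept that the statement as printed is not what your (correct) derivation establishes.
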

Applied to Proposition \ref{thm.prop-csvt}, the proximal mapping (\ref{eq.proximal-gradient}) is equal to (\ref{eq.thm-svt}) with $\tau=\eta_t\tau_S,$ and then we get the result.

\section{Mini-batch Gradients for Specific Tasks}
In the inequality \eqref{eq.gradient_of_g}, only the derivative of the loss function $\frac{\partial ~\ell(h(\xx_i), ~ \yy_i)}{\partial~ \WW_t}$ needs to be determined.
In this section, we provide this derivative for two specific tasks: multi-class classification and multi-label learning.

\subsection{Multi-class Classification}
Consider a multi-class classification problem with $K$ classes.
The output space is written in the one-hot form $\mathcal{Y} = \{0, 1\}^K,$ which consists of a single $1$ and $K-1$ zeros.
$$\yy_i = [0, \cdots, 0, 1, 0, \cdots, 0]^T,$$
where only the $k$-th element is labeled as one.
The margin of multi-class classification is
\begin{align*}
 m_h(\xx_i, ~ \yy_i) = [h(\xx_i)]^T \yy_i - \max_{\yy_i' \not =\yy_i} [h(\xx_i)]^T \yy_i'.
\end{align*}
The hypothesis $h$ misclassifies the instance $(\xx_i, \yy_i)$ if $m_h(\xx_i, ~ \yy_i) \leq 0$.
If $0$-$1$ loss is used, we have $\ell(h(\xx_i), ~ \yy_i) = 1_{m_h(\xx_i, ~ \yy_i) \leq 0}.$
Because the $0$-$1$ loss is not continuous and thus hard to handle, we consider other loss functions that are continuous to upper bound this loss.
Specifically, we employ the hinge loss for multi-class classification:
$$\ell(h(\xx_i), ~ \yy_i) = |1 - m_h(\xx_i, ~ \yy_i)|_+.$$
The hinge loss is nondifferentiable when $m_h(\xx_i, ~ \yy_i) = 0,$ so we use the sub-gradient in this case.
For multi-class classification, the sub-gradient of the loss function is
\begin{equation}
 \label{eq.mc-loss-derivative}
 \frac{\partial ~\ell(h(\xx_i), ~ \yy_i)}{\partial~ \WW_t}=
 \left\{
 \begin{aligned}
 & \mathbf{0}_{D \times K}, ~~~~~~~~~~~~ m_h(\xx_i, ~ \yy_i) \geq 1, \\
 & \phi(\xx)[\yy_i'-\yy_i]^T, ~~~~~~~\text{otherwise},
 \end{aligned}
 \right.
\end{equation}
where $(\xx_i, ~ \yy_i)$ is an instance from the labeled sample $\mathcal{D}_l.$

\subsection{Multi-label Learning}
Consider the multi-label learning scenario with an output space $\mathcal{Y} = \{0, 1\}^K$ for multi-label classification and $\mathcal{Y} = \mathbb{R}^K$ for multi-label regression.
We define the loss function as
$
 \ell(h(\xx_i), ~ \yy_i) = \|\yy - h(\xx_i)\|_2^2.
$
The gradient of the loss is
\begin{equation}
 \label{eq.ml-loss-derivative}
 \frac{\partial ~\ell(h(\xx_i), ~ \yy_i)}{\partial~ \WW_t}=
 2 \phi(\xx_i) [h(\xx_i) - \yy_i]^T,
\end{equation}
where $(\xx_i, ~ \yy_i)$ is an instance from the labeled sample $\mathcal{D}_l.$

\bibliographystyle{elsarticle-num}
\bibliography{all}
\end{document}